%% file: neurips_2026.tex
\documentclass{article}

\PassOptionsToPackage{numbers, compress}{natbib}
\usepackage[preprint]{neurips_2026}

\usepackage[utf8]{inputenc} % allow utf-8 input
\usepackage[T1]{fontenc}    % use 8-bit T1 fonts
\usepackage{hyperref}       % hyperlinks
\hypersetup{
  colorlinks=true,
  citecolor={blue!60!black},
  linkcolor={red!60!black},
  urlcolor={blue!60!black},
}
\usepackage{url}            % simple URL typesetting
\usepackage{booktabs}       % professional-quality tables
\usepackage{multirow}       % multi-row cells in tables
\usepackage{caption}        % \captionof outside floats
\usepackage{subcaption}     % \subcaption inside minipage
\usepackage{amsfonts}       % blackboard math symbols
\usepackage{amssymb}        % additional math symbols (triangles etc.)
\usepackage{amsmath}
\usepackage{amsthm}
\usepackage{nicefrac}       % compact symbols for 1/2, etc.
\usepackage{microtype}      % microtypography
\usepackage{xcolor}         % colors
\usepackage{colortbl}       % \rowcolor in tables
\usepackage{graphicx}       % figures
\usepackage{wrapfig}        % wrap figures
\usepackage{pifont}          % ding symbols (checkmark / cross)
\usepackage{array}           % m{} column type
\usepackage{placeins}        % keep floats near sections
\usepackage{enumitem}        % customizable list environments
\usepackage{algorithm}       % algorithm float environment
\usepackage{algorithmic}     % algorithmic pseudocode
\usepackage[most]{tcolorbox} % colored boxes for findings/roadmap
\usepackage{tikz}            % for custom icons

\newcommand{\up}[1]{\textcolor{green!50!black}{\small$\blacktriangle$\,#1}}
\newcommand{\down}[1]{\textcolor{red!70!black}{\small$\blacktriangledown$\,#1}}

\newtcolorbox{findingbox}[1][]{%
  enhanced, colback=blue!2, colframe=dareblue!60,
  boxrule=0.6pt, arc=3pt, left=6pt, right=6pt, top=5pt, bottom=5pt,
  fonttitle=\bfseries\small, coltitle=dareblue,
  attach boxed title to top left={yshift=-2mm, xshift=4mm},
  boxed title style={colback=white, colframe=white, boxrule=0pt},
  title={#1}
}
\newtcolorbox{roadmapbox}{%
  enhanced, colback=gray!4, colframe=gray!50,
  boxrule=0.4pt, arc=2.5pt, left=6pt, right=6pt, top=4pt, bottom=4pt
}

\newcommand{\iconeasy}{\raisebox{-0.15em}{\includegraphics[height=1em]{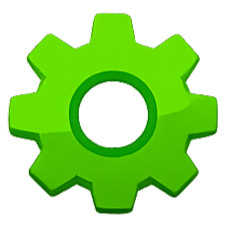}}\xspace}
\newcommand{\iconmedium}{\raisebox{-0.15em}{\includegraphics[height=1em]{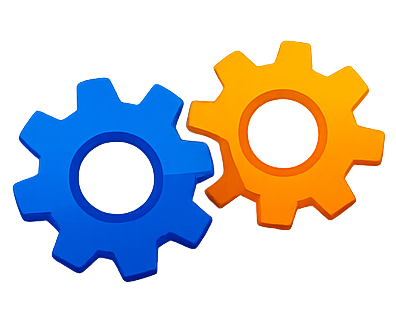}}\xspace}
\newcommand{\iconhard}{\raisebox{-0.15em}{\includegraphics[height=1em]{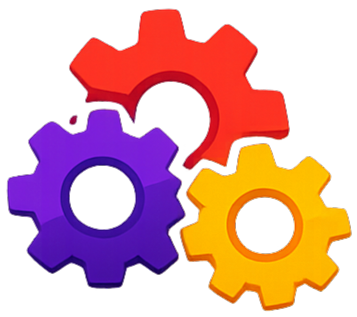}}\xspace}
\newcommand{\iconeasybig}{\raisebox{-0.2em}{\includegraphics[height=1.2em]{figure/icon/easy.png}}\xspace}
\newcommand{\iconmediumbig}{\raisebox{-0.2em}{\includegraphics[height=1.2em]{figure/icon/medium.png}}\xspace}
\newcommand{\iconhardbig}{\raisebox{-0.2em}{\includegraphics[height=1.2em]{figure/icon/hard.png}}\xspace}

\newtheorem{theorem}{Theorem}[section]
\newtheorem{proposition}[theorem]{Proposition}

\newcommand{\appref}[1]{\hyperref[#1]{Appendix~\ref*{#1}}}
\newcommand{\cmark}{\textcolor{green!50!black}{\ding{51}}}
\newcommand{\xmark}{\textcolor{red}{\ding{55}}}

\usepackage{xspace}
\usepackage{xcolor}
\definecolor{dareblue}{HTML}{1F4E79}
\newcommand{\ours}{\textcolor{dareblue}{\textsc{Dare}}\xspace}
\newcommand{\circmarklight}[2]{%
  \tikz[baseline=(char.base)]{
    \node[shape=circle, draw=#1, inner sep=1pt, font=\scriptsize\bfseries, text=#1] (char) {#2};
  }%
}

\title{\textbf{\ours}: Difficulty-Adaptive Reinforcement Learning with Co-Evolved Difficulty Estimation}

\author{
  \textbf{Yang Zhou}$^{1*}$ \quad
  \textbf{Can Jin}$^{1*\dag}$ \quad
  \textbf{Zihan Dong}$^{1}$ \quad
  \textbf{Zhepeng Wang}$^{2}$ \quad
  \textbf{Yanting Yang}$^{1}$ \\
  \textbf{Shiyu Zhao}$^{1}$ \quad
  \textbf{Lei Li}$^{3}$ \quad 
  \textbf{Runxue Bao}$^{4}$ \quad  
  \textbf{Yaochen Xie}$^{2}$ \quad 
  \textbf{Dimitris N. Metaxas}$^{1}$ \\
  \\ 
  $^1$Rutgers University \quad
  $^2$Amazon \quad
  $^3$Washington University \quad
  $^4$Google
}

\makeatletter
\renewcommand{\@makefnmark}{}
\makeatother

\begin{document}

\maketitle
\footnotetext{$^{*}$ Equal contribution. $^{\dagger}$ Lead architect. Correspondence to Yang Zhou \href{mailto:eta.yang@rutgers.edu}{<eta.yang@rutgers.edu>} and Can Jin \href{mailto:can.jin@rutgers.edu}{<can.jin@rutgers.edu>}.}

\input{sec/0_abstract}

\input{sec/1_introduction}

\input{sec/2_related_work}
\input{sec/3_preliminary}

\input{sec/4_method}

\input{sec/5_experiment}

\input{sec/6_conclusion}

\clearpage

\bibliographystyle{plainnat} 
\bibliography{neurips_2026}

%%%%%%%%%%%%%%%%%%%%%%%%%%%%%%%%%%%%%%%%%%%%%%%%%%%%%%%%%%%%
\clearpage
\appendix

\input{sec/appendix}

% \section{Technical Appendices and Supplementary Material}
% Technical appendices with additional results, figures, graphs and proofs may be submitted with the paper submission before the full submission deadline (see above), or as a separate PDF in the ZIP file below before the supplementary material deadline. There is no page limit for the technical appendices.

%%%%%%%%%%%%%%%%%%%%%%%%%%%%%%%%%%%%%%%%%%%%%%%%%%%%%%%%%%%%

\clearpage

\end{document}

%% file: sec/0_abstract.tex
\begin{abstract}

Reinforcement learning improves the reasoning ability of large language models but remains costly and sample-inefficient, as many rollouts provide weak learning signals. Difficulty-aware data selection methods attempt to address this by prioritizing moderately difficult prompts, yet our analysis reveals three limitations: difficulty estimates become inaccurate under policy drift, data selection alone yields limited final-performance gains, and inference efficiency remains largely unchanged. These findings suggest that efficient and effective RL requires more than filtering by difficulty: the policy should learn to solve hard tasks while producing concise responses for easy ones. To this end, we propose \ours, a unified framework that co-evolves difficulty estimation with the policy via self-normalized importance sampling, maintains diverse difficulty coverage through a symmetric Beta sampling distribution, and applies tailored training strategies across difficulty tiers with adaptive compute allocation. Extensive experiments across multiple models and domains demonstrate that \ours consistently outperforms existing methods in training efficiency, final effectiveness, and inference efficiency, producing more concise responses on easy tasks while improving correctness on hard ones. Code is available at \url{https://github.com/EtaYang10th/DARE}.

\end{abstract}

%% file: sec/1_introduction.tex
% \red{add PPO experiments}

% \red{If training solely on hard tasks, the model might forget how to solve easy tasks that previously can answer correctly}

% Embedding-based predictors~\cite{sun2025dots} estimate difficulty based on similarity to a fixed prompt space. Entropy-based methods use sequence-level entropy as a proxy for task difficulty \citep{pang2026edco}. Empirical success rates over multiple rollouts also serve as a measure of prompt difficulty \citep{tong2024dart}, and recent work explores LLMs themselves as difficulty judges \citep{tabib2025toward}. 

\section{Introduction}
\label{sec:intro}
Reinforcement learning (RL) has emerged as a powerful paradigm for improving the reasoning capabilities of large language models (LLMs), with algorithms such as GRPO~\citep{guo2025deepseek,shao2024deepseekmath} driving substantial gains on mathematical, coding, and scientific benchmarks~\citep{shao2024deepseekmath,jain2025livecodebench_iclr,rein2023gpqa}. However, existing RL methods remain highly resource-intensive and sample-inefficient~\citep{yue2025does,tang2026highdataeff,sun2025dots}, largely due to repeated rollout generation, in which many rollouts yield little useful gradient signal. For instance, in standard GRPO training, prompts that are too easy or too hard produce small group-relative advantages, resulting in weak learning signals and wasted rollout budgets~\citep{bae2026onlinedifficulty,li2025knapsackrl,zhang2025speedrl}.

This inefficiency has motivated a growing body of work on \emph{difficulty-aware data selection for RL}, which prioritizes prompts of moderate difficulty where the expected gradient signal is maximized~\citep{yu2025dapo,sun2025dots,zeng2026cures,bae2026onlinedifficulty}. Central to these approaches is the accuracy of the difficulty estimator. In this work, we first conduct a comprehensive analysis of existing difficulty estimation methods~\citep{sun2025dots,pang2026edco,qu2025promptdifficulty,tabib2025toward} for RL and identify three key limitations: \ding{182}~\textit{Inaccurate difficulty estimation under policy drift.} As the policy evolves during RL training, static or slowly adapting estimators become increasingly unreliable. Prompts deemed ``moderately difficult'' may in fact be trivially easy or intractably hard for the current policy, undermining the intended efficiency gains. \ding{183}~\textit{Limited final-performance gains from data selection alone.} Given a sufficient training budget, difficulty-aware data selection alone converges to similar final performance as standard training, with many hard tasks remaining unsolved~\citep{zheng2025actonlywhen,zhang2025speedrl,zeng2026cures}. \ding{184}~\textit{No improvement in inference efficiency.} Models trained with difficulty-based filtering still produce chain-of-thought (CoT) responses of similar length compared to those trained without filtering. These findings suggest that improving both the inference efficiency and effectiveness of RL requires more than selecting moderately difficult data: the policy must also learn to solve hard tasks while preserving its ability to solve easy ones efficiently.

To address these limitations, we propose \ours (\textbf{D}ifficulty-\textbf{A}daptive \textbf{R}einforcement learning with co-\textbf{E}volved difficulty estimation), a unified framework that couples accurate, policy-aligned difficulty estimation with difficulty-adaptive training strategies for efficient and effective RL. First, we introduce a difficulty estimation and policy co-evolution mechanism: by combining self-normalized importance sampling with a prompt-wise First-In-First-Out (FIFO) replay buffer, our method tracks policy changes and maintains accurate difficulty estimates throughout training. We further propose a symmetric Beta sampling distribution that biases selection toward medium-difficulty samples while retaining easy and hard ones, reducing the risk of forgetting easy skills and continuing to expose the policy to hard prompts. Finally, we partition prompts into three difficulty tiers---easy, medium, and hard---and adaptively allocate compute and apply a tailored training strategy to each tier. 

We evaluate \ours across multiple models and task domains. Results show consistent improvements in training efficiency (measured by steps and wall-clock time), final effectiveness, and inference efficiency (measured by average token usage) compared to existing difficulty-aware RL methods. Extensive ablation studies and additional analyses confirm that our method produces shorter and more concise answers on easy tasks while improving correctness on hard tasks. These findings demonstrate that \ours effectively improves both RL training and inference efficiency while adaptively distributing reasoning effort across difficulty levels, providing a promising direction for efficient and effective RL.

Overall, our contributions are fourfold:
\begin{itemize}[leftmargin=1.5em,itemsep=1pt,topsep=2pt,parsep=1pt]
\item[\circmarklight{dareblue}{1}] \textbf{Analysis of existing difficulty-aware data selection for RL.} We identify three empirical gaps in current methods (\autoref{sec:preliminary}): inaccurate difficulty estimation under policy drift, limited final-performance gains from data selection alone, and no improvement in inference efficiency.

\item[\circmarklight{dareblue}{2}] \textbf{Accurate difficulty estimation via policy co-evolution.} We propose a difficulty estimation and policy co-evolution strategy that pairs self-normalized importance sampling with a Beta sampling distribution, yielding more accurate difficulty estimates and diverse difficulty-level data selection to improve training efficiency.

\item[\circmarklight{dareblue}{3}] \textbf{Difficulty-adaptive training strategies.} We combine difficulty-aware data selection with tailored RL training strategies that encourage shorter and more concise solutions for easy tasks while increasing correctness and reasoning depth for hard tasks, enabling dynamic training and inference compute allocation across difficulty levels.

\item[\circmarklight{dareblue}{4}] \textbf{Comprehensive empirical validation.} We conduct extensive experiments across diverse model architectures and tasks. Our method consistently achieves better training efficiency, inference efficiency, and final effectiveness compared to existing difficulty-aware RL methods. Detailed ablation studies further demonstrate the contribution of each component in our design.
\end{itemize}

%% file: sec/3_preliminary.tex
\section{Preliminary Investigation}
\label{sec:preliminary}
Accurate difficulty estimation is the foundation of the difficulty-aware RL method. In this section, we analyze the accuracy of existing difficulty estimation methods and their impact on training efficiency, inference efficiency, and final effectiveness when used for RL training.

\paragraph{Experimental Setting.} We train Qwen2.5-Math-1.5B~\citep{yang2024qwen2} with GRPO under several difficulty-based online data selection methods, and evaluate on MATH-500~\citep{lightman2023let}. The training dataset composition and experimental setup follow \autoref{sec:exp_setup}, with full hyperparameters provided in \appref{sec:preliminary_investigation_details}. All difficulty estimation methods use the same sampling rule, which targets samples with predicted difficulty $0.5$. We compare the following methods:
(i)~\textit{Random Selection}: selects samples uniformly without difficulty consideration;
(ii)~\textit{Embedding-Based Prediction}~\citep{sun2025dots}: computes difficulty based on embedding similarity of a prompt to a representative reference set;
(iii)~\textit{Entropy-Based Estimation}~\citep{pang2026edco}: uses sequence-level entropy as a proxy for difficulty;
(iv)~\textit{LLM-Judge}~\citep{tabib2025toward}: uses the policy model itself as a difficulty judge;
(v)~\textit{Bayesian Posterior Estimation}~\citep{qu2025promptdifficulty}: estimates difficulty via Bayesian posterior updates;
(vi)~\textit{Previous Failure Rate Estimation (Previous FR Estimation)}: estimates difficulty as the failure rate from the latest previous rollouts, computed as one minus the average binary reward over the GRPO group of size $G$;
(vii)~\textit{Current Failure Rate Estimation (Current FR Estimation)}: serves as the \textbf{ground-truth} difficulty reference, computed as the current policy's failure rate over $G$ rollouts. (viii) \textit{Our Estimation}: uses the replay buffer with policy-aligned importance-sampling correction indicated in \autoref{sec:is_estimation}.
Detailed descriptions of all baselines are provided in \autoref{sec:difficulty_estimation_baselines}.

We evaluate estimation accuracy against \textit{Current FR Estimation} using MSE and MAE, and estimate its intrinsic error from the discrepancy between two independent current-policy rollout runs. We also analyze the ground-truth difficulty distribution of samples selected by each method at predicted difficulty $0.5$, measuring how well predicted difficulty matches actual prompt difficulty. Training efficiency is measured by the steps and wall-clock time needed to reach matched performance, while inference efficiency is measured by average output tokens, both overall and stratified by the dataset's original difficulty labels.

% \subsection{Difficulty Estimation Analysis Under Policy Drift}

\begin{figure}[t]
\vspace{-2mm}
\centering
\includegraphics[width=\textwidth]{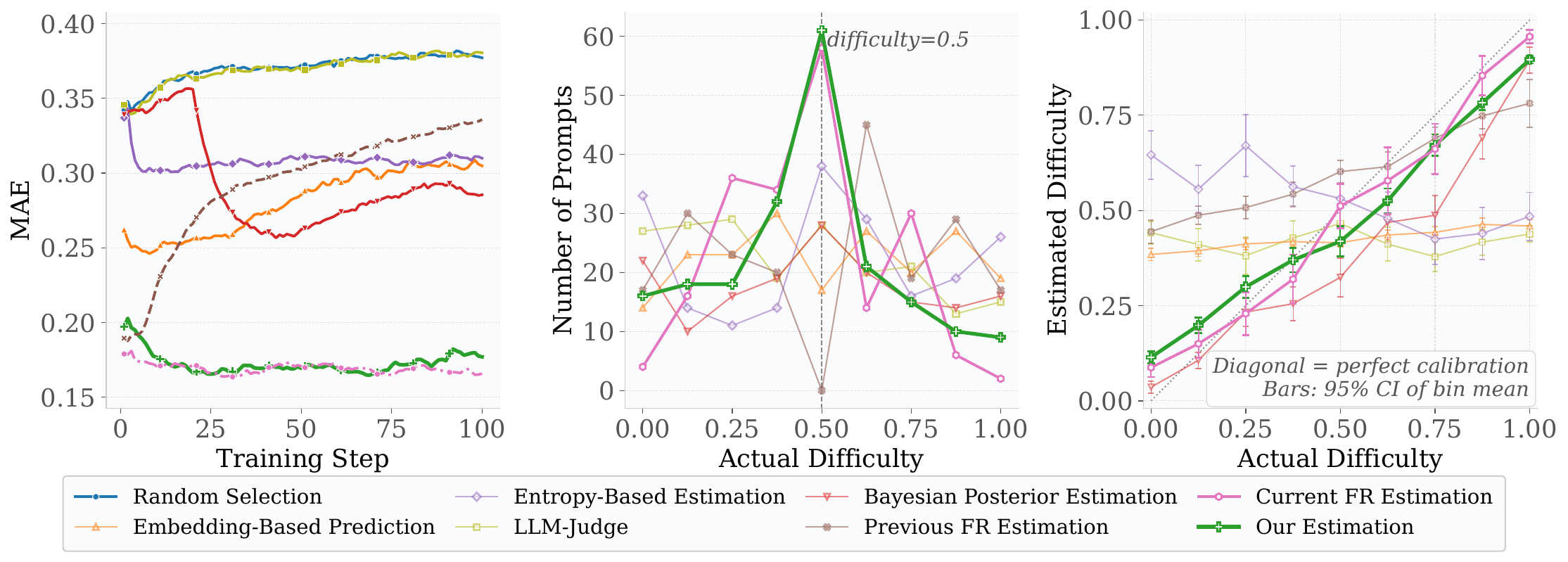}
\caption{Difficulty estimation results for Qwen2.5-Math-1.5B during and after GRPO training under different difficulty-estimation methods. (Left) MAE between each estimator and \textit{Current FR Estimation} across training steps. (Middle) Distribution of ground-truth difficulty for prompts assigned a predicted difficulty of $0.5$ by each method. (Right) Calibration of estimated difficulty against ground-truth difficulty on MATH-500, with bin means and $95\%$ confidence intervals. \textbf{Existing difficulty estimation methods are inaccurate during RL training.}}
\label{fig:difficulty_estimation_analysis}
\vspace{-2mm}
\end{figure}

\paragraph{Difficulty Estimation Analysis Under Policy Drift.} The difficulty estimation accuracy and distribution results are shown in \autoref{fig:difficulty_estimation_analysis} and \autoref{tab:difficulty_accuracy}. We make the following observations: \ding{182}~\textbf{Existing difficulty estimation methods are inaccurate during RL training.} The intrinsic estimation error of \textit{Current FR Estimation} is approximately $0.17$, while the MAE of all existing methods exceeds $0.25$---corresponding to an error of roughly $2$ out of $8$ rollouts, far above the intrinsic baseline. \ding{183}~\textbf{Estimation error grows as training progresses.} The MAE of all existing baselines begins increasing after roughly $40$ training steps, indicating that their difficulty estimates become less reliable as the policy drifts during RL training. \ding{184}~\textbf{Estimated difficulty distributions deviate substantially from ground truth.} When filtering for samples with predicted difficulty $0.5$, all existing methods retain many samples whose ground-truth difficulty is far from $0.5$. Moreover, methods such as \textit{Entropy-Based Estimation} and \textit{LLM-Judge} systematically overestimate difficulty for easy prompts, rather than producing a monotonic mapping from true difficulty to estimated difficulty. These observations show that neither static estimators (e.g., \textit{Previous FR Estimation}) nor existing policy-aware methods (e.g., \textit{Entropy-Based Estimation}) produce reliable difficulty estimates throughout RL training, motivating us to seek an estimator that better tracks policy-dependent difficulty as training evolves.

% \subsection{Training and Inference Efficiency and Effectiveness Analysis}

\begin{figure}[t]
\vspace{-2mm}
\centering
\includegraphics[width=1.0\textwidth]{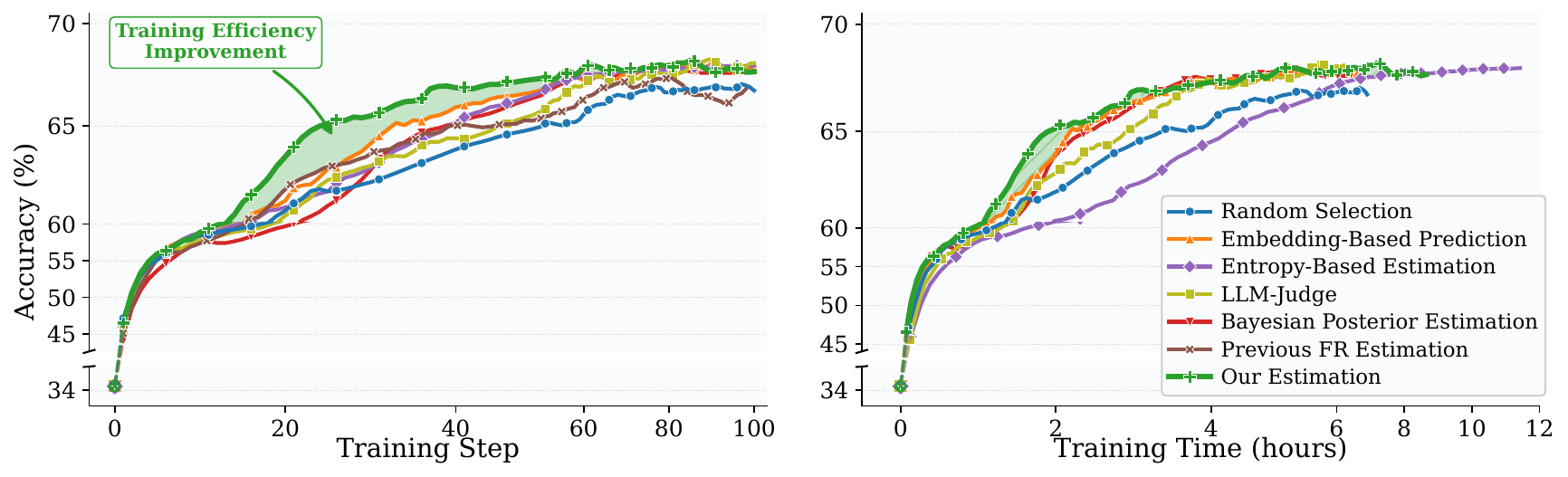}
\caption{Training-step and wall-clock efficiency on MATH-500 with Qwen2.5-Math-1.5B trained by GRPO using different difficulty-estimation methods for data selection. \textbf{Data filtration improves training efficiency but yields only limited final-accuracy gains.}}
\label{fig:training_curves}
\vspace{-2mm}
\end{figure}

\begin{table}[t]
\centering
\caption{Average output token usage and accuracy by the original MATH-500 difficulty level for GRPO using Qwen2.5-Math-1.5B with different difficulty-estimation-based data selection methods. \textbf{Filtration alone does not improve inference efficiency.}}
\label{tab:prelim_inference_tokens}
\resizebox{\textwidth}{!}{%
\begin{tabular}{l c c c c c c}
\toprule
\multirow{2}{*}{Method} & Level 1 & Level 2 & Level 3 & Level 4 & Level 5 & Overall \\
 & \multicolumn{1}{c}{\small Tokens | Acc.} & \multicolumn{1}{c}{\small Tokens | Acc.} & \multicolumn{1}{c}{\small Tokens | Acc.} & \multicolumn{1}{c}{\small Tokens | Acc.} & \multicolumn{1}{c}{\small Tokens | Acc.} & \multicolumn{1}{c}{\small Tokens | Acc.} \\
\midrule
\textit{Random Selection} & 357.3 | 88.4 & 441.5 | 80.0 & 539.4 | 77.1 & 726.2 | 66.4 & 811.6 | 45.5 & 626.9 | 67.4 \\
\textit{Embedding-Based Pred.} & 349.5 | 90.7 & 451.5 | 81.1 & 583.4 | 80.0 & 706.0 | 71.1 & 876.7 | 44.8 & 649.5 | 69.4 \\
\textit{Entropy-Based Est.} & 325.5 | 88.4 & 452.0 | 78.9 & 573.7 | 78.1 & 682.4 | 69.5 & 912.9 | 48.5 & 649.2 | 69.0 \\
\textit{LLM-Judge} & 342.2 | 89.7 & 413.0 | 81.2 & 536.0 | 80.0 & 751.3 | 68.5 & 809.9 | 49.0 & 625.7 | 69.8 \\
\textit{Bayesian Posterior Est.} & 339.2 | 91.2 & 449.8 | 81.4 & 587.4 | 80.2 & 714.5 | 70.9 & 815.2 | 45.6 & 634.8 | 69.7 \\
\textit{Previous FR Est.} & 357.3 | 87.4 & 435.5 | 81.0 & 567.4 | 76.4 & 712.9 | 67.4 & 800.6 | 46.8 & 625.3 | 67.9 \\
% \textit{Current FR Est.} & 350.0 | 83.7 & 446.9 | 82.2 & 573.0 | 83.8 & 829.1 | 62.5 & 857.2 | 49.3 & 672.9 | 68.8 \\
\textit{Our Estimation} & 359.4 | 90.7 & 438.3 | 82.2 & 545.7 | 82.9 & 679.6 | 71.1 & 873.0 | 43.3 & 632.3 | 69.8 \\
\bottomrule
\end{tabular}%
}
\vspace{-2mm}
\end{table}

\paragraph{Training and Inference Efficiency and Effectiveness Analysis.}
We further analyze RL training and inference performance when each estimation method selects only prompts with estimated difficulty $0.5$. Training curves are shown in \autoref{fig:training_curves} and inference results in \autoref{tab:prelim_inference_tokens} (comprehensive evaluation in \autoref{sec:comprehensive_preliminary_results}). We draw the following conclusions: \ding{182}~\textbf{Difficulty filtration improves sample efficiency but yields limited final accuracy gains.} Most difficulty-aware methods converge faster than \textit{Random Selection} GRPO in both training steps and wall-clock time, yet achieve only limited final performance improvements. This suggests that difficulty-aware selection primarily accelerates optimization; improving final accuracy requires training mechanisms beyond sample filtration alone. \ding{183}~\textbf{More accurate difficulty estimation improves training-step efficiency.} \textit{Embedding-Based Prediction} and \textit{Bayesian Posterior Estimation} produce more accurate difficulty estimates than \textit{LLM-Judge} and \textit{Random Selection} (\autoref{fig:difficulty_estimation_analysis}), and exhibit correspondingly faster training-step convergence, confirming that estimation quality directly impacts training efficiency. \ding{184}~\textbf{Filtration alone does not improve inference efficiency.} As shown in \autoref{tab:prelim_inference_tokens}, difficulty-aware methods produce similar token usage to \textit{Random Selection} GRPO across difficulty levels, despite differences in training efficiency. This is expected because filtration changes which prompts are sampled during training, but does not teach the model to allocate different reasoning lengths to different difficulty levels. These findings motivate difficulty-adaptive training strategies that go beyond data filtration to improve RL training efficiency, inference efficiency, and final effectiveness.

%% file: sec/4_method.tex
\section{Method}
\label{sec:method}

\begin{figure}[t]
\vspace{-2mm}
\centering
\includegraphics[width=\textwidth]{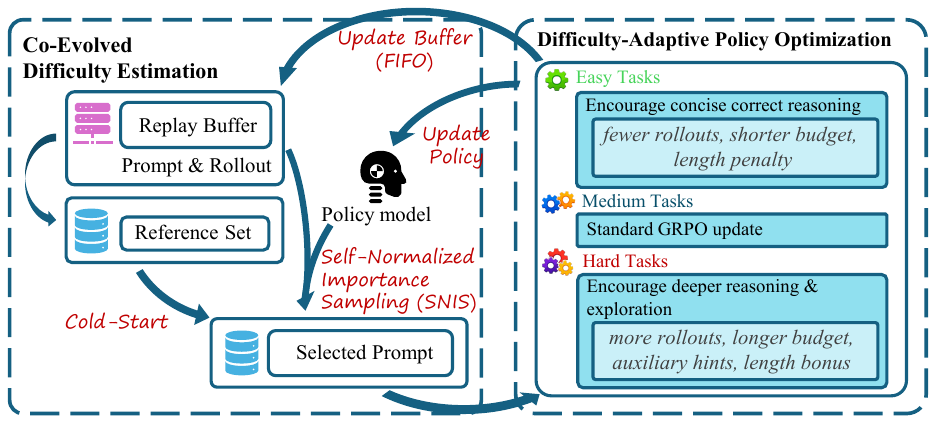}
\caption{Overview of \ours. At each epoch, \ours updates a prompt-wise replay buffer, estimates current prompt difficulty with SNIS, and uses the estimate for dynamic prompt selection and difficulty-adaptive policy optimization.}
\label{fig:overview}
\vspace{-2mm}
\end{figure}

The analysis in \autoref{sec:preliminary} shows that existing difficulty-aware RL suffers from inaccurate estimates under policy drift, while data filtration alone yields limited final-accuracy gains and does not improve inference efficiency. To address these issues, we propose \ours, a difficulty-adaptive RL framework that integrates policy-aligned difficulty estimation, dynamic data selection, and difficulty-adaptive policy optimization. At each training epoch, \ours estimates current prompt difficulty from a replay buffer with policy-aligned correction, samples prompts with a bias toward medium difficulty while retaining easy and hard prompts, and applies difficulty-specific compute budgets and reward shaping during policy optimization. Unlike filtration-only methods, which use difficulty estimates only for data selection, \ours adapts both the training data and learning signals as the policy evolves. An overview is shown in \autoref{fig:overview}, and pseudo-code is provided in \autoref{sec:algorithm}.

\subsection{Co-Evolved Difficulty Estimation}
\label{sec:is_estimation}

\paragraph{Replay buffer and cold start.}
We maintain a FIFO replay buffer $\mathcal{B}$ to support difficulty estimation and sample reuse. For each prompt $q$, let \(K_q\) be the number of currently buffered rollouts; when \(q\) is fixed, we write \(K=K_q\). The buffer stores
\begin{equation}
\label{eq:replay_buffer}
    \mathcal{B}_q
    =
    \left\{
    \left(
    o_k,
    r_k,
    \log \pi_{\theta_{\mathrm{beh},k}}(o_k\mid q)
    \right)
    \right\}_{k=1}^{K_q},
\end{equation}
where $o_k$ is a generated response, $r_k\in\{0,1\}$ is the binary outcome reward, and $\pi_{\theta_{\mathrm{beh},k}}$ is the behavior policy that generated $o_k$. 

For prompts without buffered rollouts, we use an embedding-based cold start \citep{sun2025dots}. We sample a small reference set $\mathcal{D}_{\mathrm{ref}}=\{p_i\}_{i=1}^{N}$ from the training set $\mathcal{D}$, where $|\mathcal{D}_{\mathrm{ref}}|\ll|\mathcal{D}|$. For each reference prompt $p_i$, we generate $G$ rollouts and define its initial difficulty as the failure rate,
\begin{equation}
    d_{p_i}
    =
    1-\bar r_{p_i}
    =
    \frac{1}{G}\sum_{k=1}^{G}(1-r_k).
    \label{eq:success_rate_difficulty}
\end{equation}
For an unseen prompt $q$, we compute a similarity-weighted difficulty estimate,
\begin{equation}
    a_i
    =
    \frac{\exp(z_q^\top z_i/\sqrt{h})}
    {\sum_{j=1}^{N}\exp(z_q^\top z_j/\sqrt{h})},
    \qquad
    \hat d_q
    =
    \sum_{i=1}^{N} a_i d_{p_i},
    \label{eq:embedding_weight_sum}
\end{equation}
where $z_q$ and $z_i$ are prompt embeddings and $h$ is the embedding dimension. This provides an initial difficulty estimate before sufficient rollout history is available.

\paragraph{Policy-aligned replay estimation.}
Once a prompt has buffered rollouts, directly averaging historical rewards can be biased because those rollouts may come from older policies. We correct this policy drift with self-normalized importance sampling (SNIS). For each buffered rollout $o_k$, we compute the current-to-behavior log-ratio and clipped weight as
\begin{equation}
    w_k
    =
    \exp\Biggl(
    \operatorname{clip}
    \Bigl(
    \sum_{t=1}^{|o_k|}
    \bigl[
    \log \pi_\theta(o_{k,t}\mid q,o_{k,<t})
    -
    \log \pi_{\theta_{\mathrm{beh},k}}(o_{k,t}\mid q,o_{k,<t})
    \bigr],
    -c,
    c
    \Bigr)
    \Biggr),
    \label{eq:is_weights}
\end{equation}
where clipping is applied for numerical stability. Here \(\pi_\theta\) and \(\pi_{\theta_{\mathrm{beh},k}}\) denote the actual token-level rollout distributions after applying the decoding and stopping rules, rather than raw model softmax distributions. The current-policy difficulty estimate is then
$\hat d_q
=
\frac{
\sum_{k=1}^{K} w_k(1-r_k)
}{
\sum_{k=1}^{K} w_k
}.$
This estimator tracks policy evolution by upweighting rollouts likely under the current policy and downweighting stale ones. In practice, it avoids the cost of re-rolling every prompt at each step, as required by \textit{Current FR Estimation}, while keeping replay statistics policy-aligned and achieving comparable estimation accuracy, as shown in \autoref{sec:preliminary}.

We further give an idealized finite-sample decomposition for the SNIS estimate. Fix a prompt \(q\), an estimation epoch, and its \(K\) buffered rollouts. Let \(Y(o)=1-r(o)\), \(d_\theta(q)=\mathbb{E}_{O\sim\pi_\theta(\cdot\mid q)}[Y(O)]\), and \(\rho_k(o)=\frac{d\pi_\theta(\cdot\mid q)}{d\mu_k(\cdot\mid q)}(o)\) for \(O_k\sim\mu_k(\cdot\mid q)\), where \(\pi_\theta\) and \(\mu_k\) denote the actual current and behavior rollout distributions, including decoding and stopping. Let \(W_k\) be the clipped weight with \(0\le W_k(o)\le C_w\); for \autoref{eq:is_weights}, \(C_w=e^c\).

\begin{proposition}[Finite-sample error of clipped SNIS difficulty estimation, the Proof is detailed in \autoref{sec:app_snis_finite_sample_proof}]
\label{prop:snis_finite_sample}
Suppose \(O_1,\ldots,O_K\) are conditionally independent given the fixed epoch and current policy, \(\pi_\theta(\cdot\mid q)\ll\mu_k(\cdot\mid q)\) for all \(k\), and \(K^{-1}\sum_k\mathbb{E}_k[W_k(O)]\ge b_{\min}>0\), where \(\mathbb{E}_k\) denotes expectation under \(O\sim\mu_k(\cdot\mid q)\). For \(\hat d_q=(K^{-1}\sum_k W_k(O_k)Y(O_k))/(K^{-1}\sum_k W_k(O_k))\) and \(\varepsilon_\delta=C_w\sqrt{\log(4/\delta)/(2K)}<b_{\min}\), with probability at least \(1-\delta\),
\begin{equation}
\label{eq:finite_sample_bound}
\left|\hat d_q-d_\theta(q)\right|
\le
\frac{2\varepsilon_\delta}{b_{\min}-\varepsilon_\delta}
+
\frac{1}{b_{\min}K}
\sum_{k=1}^{K}
\mathbb{E}_k\!\left[\left|\rho_k(O)-W_k(O)\right|\right].
\end{equation}
For \autoref{eq:is_weights}, the second term equals \((b_{\min}K)^{-1}\sum_k\mathbb{E}_k[(\rho_k(O)-e^c)_+ +(e^{-c}-\rho_k(O))_+]\). The worst-case concentration term can be loose for large \(c\), so we use ESS (see \appref{sec:algorithm}) in practice.
\end{proposition}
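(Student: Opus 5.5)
The plan is to separate the SNIS error into a stochastic part and a deterministic clipping-bias part, by comparing the ratio of empirical means with the ratio of their expectations. Write
\[
A=\tfrac1K\textstyle\sum_k W_k(O_k)Y(O_k),\qquad B=\tfrac1K\textstyle\sum_k W_k(O_k),
\]
and let \(a=\tfrac1K\sum_k\mathbb{E}_k[W_kY]\) and \(b=\tfrac1K\sum_k\mathbb{E}_k[W_k]\ge b_{\min}\). Then \(\hat d_q=A/B\), and I will use the triangle inequality
\[
|\hat d_q-d_\theta(q)|\le |A/B-a/b|+|a/b-d_\theta(q)|,
\]
bounding the first term by concentration and the second by the clipping bias.

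\emph{Concentration step.} Conditional on the fixed epoch and current policy, the summands \(W_k(O_k)Y(O_k)\) are independent, though not identically distributed, and lie in \([0,C_w]\) because \(Y\in\{0,1\}\). The same holds for \(W_k(O_k)\). Two-sided Hoeffding for each average, at level \(\delta/2\), gives \(|A-a|\le\varepsilon_\delta\) and \(|B-b|\le\varepsilon_\delta\), since \(2\exp(-2K\varepsilon^2/C_w^2)=\delta/2\) exactly at \(\varepsilon=\varepsilon_\delta\). A union bound puts both events together with probability at least \(1-\delta\). On this event, \(B\ge b_{\min}-\varepsilon_\delta>0\) by the assumption \(\varepsilon_\delta<b_{\min}\). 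Then
\[
|A/B-a/b|=\frac{|(A-a)b-a(B-b)|}{Bb}\le\frac{\varepsilon_\delta(b+a)}{Bb}\le\frac{2\varepsilon_\delta}{b_{\min}-\varepsilon_\delta},
\]
where I use \(0\le a\le b\); this holds because \(W_k\ge0\) and \(Y\le1\).

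\emph{Bias step.} Absolute continuity gives \(\mathbb{E}_k[\rho_k(O)(Y(O)-d_\theta(q))]=\mathbb{E}_{\pi_\theta}[Y-d_\theta(q)]=0\) for every \(k\). Hence
\[
a-d_\theta(q)\,b=\tfrac1K\textstyle\sum_k\mathbb{E}_k\bigl[(W_k-\rho_k)(Y-d_\theta(q))\bigr].
\]
Since \(|Y-d_\theta(q)|\le1\), this is at most \(\tfrac1K\sum_k\mathbb{E}_k|\rho_k-W_k|\) in absolute value. Dividing by \(b\ge b_{\min}\) gives the second term of \eqref{eq:finite_sample_bound}.

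\emph{Specialization to \eqref{eq:is_weights}.} Because \(\pi_\theta\) and \(\mu_k\) are the actual rollout distributions, including decoding and stopping, the per-token log-ratio sums telescope via the chain rule to \(\log\rho_k(o)\). So \(W_k=\operatorname{clip}(\rho_k,e^{-c},e^{c})\), with the convention that \(\rho_k=0\) maps to \(e^{-c}\). Pointwise, \(|\rho_k-W_k|=(\rho_k-e^c)_++(e^{-c}-\rho_k)_+\), which gives the stated identity. I expect the main obstacle to be bookkeeping rather than depth. First, one must justify this chain-rule identification, so that the token-level weights are genuinely the sequence-level Radon--Nikodym derivative under the effective decoding distributions. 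Second, one must choose the split \(A/B-a/b\) plus \(a/b-d\), rather than \((A-dB)/B\). The latter split would put \(b_{\min}-\varepsilon_\delta\) instead of \(b_{\min}\) in the denominator of the bias term, so the stated form needs the split used above.
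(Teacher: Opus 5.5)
Your proposal is correct and follows essentially the same route as the paper's proof. Both use the split $|A/B-a/b|+|a/b-d_\theta|$, Hoeffding with a union bound at level $\delta/2$ per average, and the bound $2\varepsilon_\delta/B$ via $0\le a\le b$; the bias term likewise comes from $a-d_\theta b=\frac1K\sum_k\mathbb{E}_k[(W_k-\rho_k)(Y-d_\theta)]$ divided by $b\ge b_{\min}$, which the paper writes as $|d_\theta S-T|/(1-S)$. Your chain-rule justification that the token-level log-ratio sum equals $\log\rho_k$ is a detail the paper uses only implicitly when it writes $W_k=\min\{\max\{\rho_k,e^{-c}\},e^c\}$.
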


\subsection{Difficulty-Guided Dynamic Data Selection}
\label{sec:sample_selection}
Given the estimated difficulty $\hat d_q$, \ours samples prompts for each training batch. Since $\hat d_q$ estimates the failure rate, the expected binary reward is approximately $1-\hat d_q$, with variance $\hat d_q(1-\hat d_q)$. This variance is maximized at $\hat d_q=0.5$, where group-relative advantages provide the strongest learning signal. We therefore bias sampling toward medium-difficulty prompts using a symmetric Beta density,
\begin{equation}
    p_{\ours}(q)
    \propto
    \mathrm{Beta}(\hat d_q;\alpha,\alpha),
    % \qquad
    \alpha=1+\kappa/2,
    \label{eq:beta_sampling}
\end{equation}
where $\kappa$ controls the concentration around $\hat d_q=0.5$.

Unlike hard-threshold filtering that retains only prompts near a fixed difficulty value~\citep{sun2025dots}, \autoref{eq:beta_sampling} assigns nonzero probability across the full difficulty range. Easy and hard prompts remain in training but appear less frequently than medium ones, reducing the risk of forgetting easy skills while continuing to expose the policy to hard prompts~\citep{parashar2026curriculum,kordi2026revisiting,yang2024can}. Within each batch, prompts are sampled without replacement according to $p_{\ours}(q)$.

\subsection{Difficulty-Adaptive Policy Optimization}
\label{sec:stratified_rl}

Data selection determines \emph{which} prompts are trained; difficulty-adaptive policy optimization determines \emph{how} each is trained. We partition selected prompts into three tiers using two thresholds $d_{\mathrm{easy}}$ and $d_{\mathrm{hard}}$: easy ($\hat d_q<d_{\mathrm{easy}}$), medium ($d_{\mathrm{easy}}\leq \hat d_q\leq d_{\mathrm{hard}}$), and hard ($\hat d_q>d_{\mathrm{hard}}$). Easy prompts are trained to produce concise correct reasoning, medium prompts receive standard GRPO updates, and hard prompts are allocated additional exploration compute to obtain useful trajectories.

We optimize the policy with the following difficulty-conditioned objective:
\begin{equation}
\label{eq:ours_grpo}
\begin{aligned}
\mathcal{J}_{\ours}(\theta)&=\mathbb{E}_{q\sim p_{\ours},\{o_i\}_{i=1}^{G_q}\sim\pi_{\theta_{\mathrm{beh}}}(\cdot\mid q)}\Biggl[
\frac{1}{G_q}\sum_{i=1}^{G_q}\frac{1}{|o_i|}\sum_{t=1}^{|o_i|}
\Biggl(\min\biggl(
\frac{\pi_\theta(o_{i,t}\mid q,o_{i,<t})}
{\pi_{\theta_{\mathrm{beh}}}(o_{i,t}\mid q,o_{i,<t})}A_i,
\\
&
\operatorname{clip}(
\frac{\pi_\theta(o_{i,t}\mid q,o_{i,<t})}
{\pi_{\theta_{\mathrm{beh}}}(o_{i,t}\mid q,o_{i,<t})},
1-\epsilon_q^{-},1+\epsilon_q^{+}
)A_i
\biggr)-\beta\,\mathrm{D}_{\mathrm{KL}}\!\left(\pi_\theta\Vert\pi_{\mathrm{ref}}\right)
\Biggr)
\Biggr].
\end{aligned}
\end{equation}
Here, $p_{\ours}$ is the sampling distribution from \autoref{eq:beta_sampling}, $G_q$ is the rollout group size for prompt $q$, $A_i$ is the group-relative advantage computed from the shaped reward $\tilde r_i$, and $\epsilon_q^{-},\epsilon_q^{+}$ are difficulty-conditioned clipping ranges. Each batch mixes a fraction $\sigma$ of fresh on-policy rollouts with $1-\sigma$ replay buffer trajectories, where $\pi_{\theta_{\mathrm{beh}}}$ denotes the behavior policy that generated replayed trajectories.

\paragraph{\iconeasybig Concise reasoning for easy prompts.}
Easy prompts typically yield uniformly correct rollouts with little advantage variation. \ours reduces the rollout count to $G_q = G_{\mathrm{easy}} < G$, imposes a shorter response budget, and applies a length-dependent penalty to correct rollouts to encourage concise solutions. We define an easiness weight $w_q^{\mathrm{easy}} = \operatorname{clamp}\bigl((d_{\mathrm{easy}} - \hat d_q)/d_{\mathrm{easy}},\, 0,\, 1\bigr)$, which increases as the prompt becomes easier for the current policy. The shaped reward is
\begin{equation}
    \tilde r_i
    =
    r_i
    -
    \mathbf{1}_{\{\hat d_q<d_{\mathrm{easy}}\}}
    \mathbf{1}_{\{r_i=1\}}\,
    \lambda_{\mathrm{easy}}\,
    w_q^{\mathrm{easy}}\,
    \frac{|o_i|}{T_{\max}},
    \label{eq:easy_shaped_reward}
\end{equation}
where $\lambda_{\mathrm{easy}}$ is the penalty coefficient and $T_{\max}$ is the maximum rollout length within the group. This gives higher advantages to shorter correct solutions while leaving incorrect rollouts unchanged, teaching the policy to save inference tokens on prompts it already solves reliably. We also relax the upper clipping bound ($\epsilon_q^{+} > \epsilon$) while keeping $\epsilon_q^{-} = \epsilon$, allowing stronger positive updates toward concise correct solutions without amplifying harmful updates.

\paragraph{\iconmediumbig Standard updates for medium prompts.}
Medium prompts use the base configuration: $G_q = G$, $\tilde r_i = r_i$, and $\epsilon_q^{-} = \epsilon_q^{+} = \epsilon$. Their rollout groups typically contain both correct and incorrect responses, providing sufficient advantage variation without additional shaping. The Beta sampler in \autoref{sec:sample_selection} ensures that medium prompts appear most frequently in each batch, making this tier the backbone of optimization.

\paragraph{\iconhardbig Exploration incentive for hard prompts.}
Hard prompts often produce rollout groups in which every response is incorrect, collapsing the group-relative advantage. \ours addresses this by allocating more rollouts ($G_q = G_{\mathrm{hard}} > G$) to increase the probability of obtaining at least one correct trajectory. Specifically, \ours generates $G$ standard rollouts for prompt $q$ alongside $G_q - G$ hint-augmented rollouts, where the hint is a successful historical trajectory retrieved from the replay buffer (if no successful trajectory exists, all $G_q$ rollouts use the standard prompt). To further support exploration, a bounded length bonus is added to incorrect rollouts. Analogously to the easy case, we define a hardness weight $w_q^{\mathrm{hard}} = \operatorname{clamp}\bigl((\hat d_q - d_{\mathrm{hard}})/(1 - d_{\mathrm{hard}}),\, 0,\, 1\bigr)$ and shape the reward as
\begin{equation}
    \tilde r_i
    =
    r_i
    +
    \mathbf{1}_{\{\hat d_q>d_{\mathrm{hard}}\}}
    \mathbf{1}_{\{r_i=0\}}\,
    \lambda_{\mathrm{hard}}\,
    w_q^{\mathrm{hard}}\,
    \frac{|o_i|}{T_{\max}}.
    \label{eq:hard_shaped_reward}
\end{equation}
This bonus encourages the policy to invest more reasoning effort on hard prompts rather than giving up early. Since $\lambda_{\mathrm{hard}} < 1$, any correct rollout still receives a higher reward than an incorrect one with the bonus, ensuring that correctness remains the dominant training signal.

Together, the easy-prompt penalty and hard-prompt bonus create a difficulty-adaptive length incentive: the policy learns to be concise on prompts it solves reliably and to reason more deeply on prompts where it struggles. This behavior cannot arise from data filtering alone, which changes \emph{which} prompts are trained but does not shape \emph{how much} reasoning effort each difficulty level receives.

%% file: sec/5_experiment.tex
\section{Experiments}
\label{sec:experiments}
We evaluate \ours along the three axes established in \autoref{sec:preliminary}: training efficiency, final accuracy, and inference-time token usage. \autoref{sec:main_results} reports the main results across five benchmarks, while \autoref{sec:ablation} analyzes the contribution of individual designs and the effects of hyperparameter tuning.

\subsection{Experimental Setup}
\label{sec:exp_setup}

\paragraph{Models and training data.}
We evaluate on three model scales: Qwen2.5-Math-1.5B, Qwen2.5-Math-7B~\citep{yang2024qwen2}, and SmolLM3-3B-Base~\citep{bakouch2025smollm3}, covering both math-specialized and general-purpose architectures. All models are trained with GRPO~\citep{shao2024deepseekmath} on the same union composed of MATH Level~3--5~\citep{hendrycks2021measuring}, DeepScaleR-40K~\citep{luo2025deepscaler}, Open-Reasoner-Zero-57K~\citep{hu2025openreasonerzero}, and DeepMath-103K~\citep{he2025deepmath} ($35{,}512$ prompts in total). All prompts share a uniform system prompt, with difficulty-aware hints appended to the user content for hard prompts; prompt templates, and dataset details are provided in \appref{sec:prompt_templates} and \appref{sec:preliminary_investigation_details}.

\paragraph{Implementation details.}
Unless otherwise stated, we use the AdamW optimizer, base rollout count $G{=}8$, maximum response length $T_{\max}{=}3072$ tokens, sampling temperature $0.6$, and top-$p$ $0.95$. For \ours, the cold-start reference size is $N{=}256$, the SNIS log-ratio clip is $c{=}4.0$, and the Beta concentration is $\kappa{=}100$ in \autoref{eq:beta_sampling}. Prompts are stratified at $(d_{\mathrm{easy}},d_{\mathrm{hard}}){=}(0.3,0.8)$ with tiered rollout counts $(G_{\mathrm{easy}},G_{\mathrm{hard}}){=}(4,16)$, reward-shaping coefficients $\lambda_{\mathrm{easy}}{=}\lambda_{\mathrm{hard}}{=}1\times10^{-4}$, and symmetric clipping $\epsilon{=}0.2$ on medium and hard prompts; for easy prompts we use the relaxed upper bound selected by the ablation in \autoref{tab:clip_sweep}. The per-prompt rollout history size $K$ and replay buffer capacity are reported in \appref{sec:preliminary_investigation_details}.

\paragraph{Baselines.}
We compare against six difficulty-aware baselines. Unless otherwise specified, each baseline follows the training pipeline of its source paper, unlike in \autoref{sec:preliminary}, where all baselines use the same data-sampling rule (i)~\textit{GRPO}~\citep{shao2024deepseekmath}: uniform prompt sampling without difficulty-aware selection. (ii)~\textit{DOTS}~\citep{sun2025dots}: Embedding-Based Prediction with rollout replay. (iii)~\textit{EDCO}~\citep{pang2026edco}: Entropy-Based Estimation with dynamic curriculum selection. (iv)~\textit{MoPPS}~\citep{qu2025promptdifficulty}: Bayesian Posterior Estimation with Thompson-sampling prompt selection. (v)~\textit{LLM-Judge} and (vi)~\textit{Previous FR Estimation} reuse the protocol of \autoref{sec:preliminary}, where GRPO is performed on prompts with estimated difficulty $0.5$.

\paragraph{Evaluation.}
We evaluate on five math benchmarks: MATH-500~\citep{lightman2023let}, GSM8K~\citep{cobbe2021training}, AIME-AMC~\citep{aimo2024aime,aimo2024amc}, MinervaMath~\citep{lewkowycz2022minerva}, and OlympiadBench~\citep{he2024olympiadbench}. For the reward-shaping and clipping ablations (\autoref{tab:el_hl_sweep}, \autoref{tab:clip_sweep}), we also report \emph{AIME24/25}~\citep{maa2024aime}, a separate $60$-problem set combining AIME~2024 and AIME~2025. To test whether the gains transfer beyond mathematics, we evaluate on three coding benchmarks: HumanEval~\citep{chen2021humaneval}, MBPP~\citep{austin2021mbpp}, and LiveCodeBench~\citep{jain2025livecodebench_iclr}, with the corresponding training data and protocol detailed in \appref{sec:coder_transfer}. Unless stated otherwise, all results use greedy decoding.

\subsection{Main Results}
\label{sec:main_results}

\begin{figure}[t]
% \vspace{-2mm}
\centering
\includegraphics[width=1.0\textwidth]{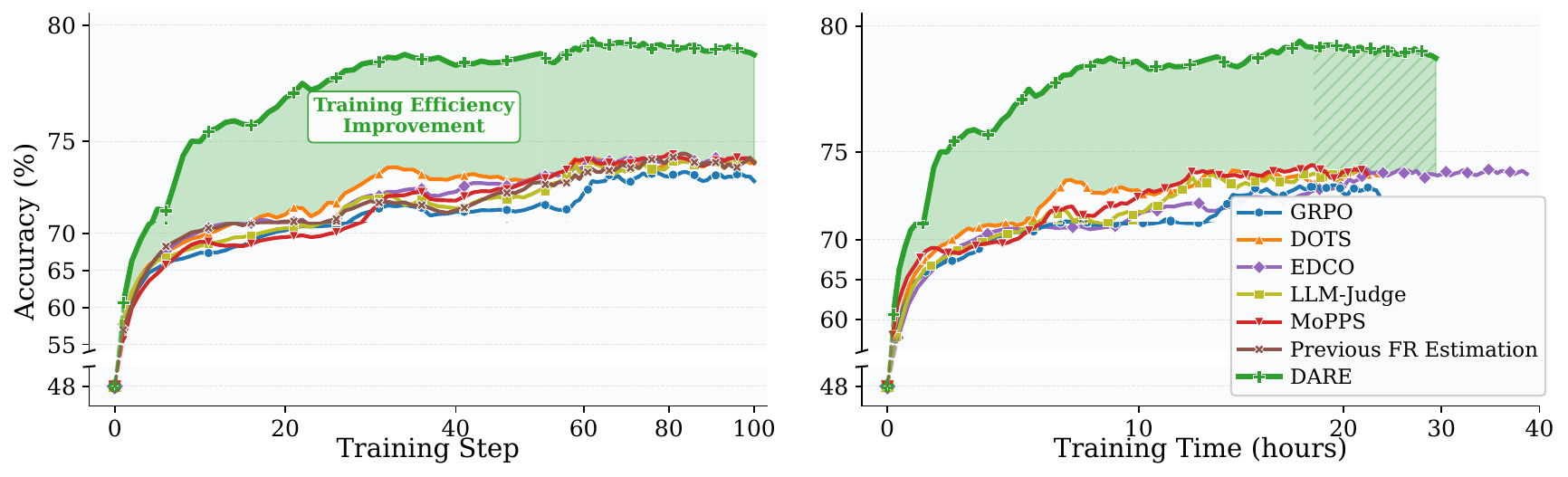}
\caption{Training performance on MATH-500 (left) and GSM8K (right) with Qwen2.5-Math-7B under various difficulty-aware RL methods. \ours improves both training efficiency and effectiveness.}
\label{fig:training_curves_7b}
\end{figure}

\begin{table}[t]
% \vspace{-2mm}
\centering
\caption{Accuracy (\%) with different difficulty-aware RL methods on Qwen2.5-Math-1.5B. \textcolor{green!50!black}{\small$\blacktriangle$} / \textcolor{red!70!black}{\small$\blacktriangledown$} denotes accuracy improvement / degradation relative to \textit{GRPO}. }
\label{tab:main_results_1p5b}
\resizebox{0.9\textwidth}{!}{%
\begin{tabular}{l cc cc cc cc}
\toprule
Method &
GSM8K & AIME-AMC &
Minerva & Olympiad \\
\midrule
\textit{GRPO} & 77.63  & 45.78  & 13.24  & 22.55  \\
DOTS & 79.38\,{\scriptsize\up{1.75}}  & 45.78 & 15.44\,{\scriptsize\up{2.20}} & 23.74\,{\scriptsize\up{1.19}}  \\
EDCO & 80.59\,{\scriptsize\up{2.96}}  & 46.99\,{\scriptsize\up{1.21}}  & 13.60\,{\scriptsize\up{0.36}}  & 24.63\,{\scriptsize\up{2.08}}  \\
LLM-Judge & 79.08\,{\scriptsize\up{1.45}}  & 42.17\,{\scriptsize\down{3.61}}  & 13.24  & 24.04\,{\scriptsize\up{1.49}} \\
MoPPS & 82.87\,{\scriptsize\up{5.24}}  & 46.99\,{\scriptsize\up{1.21}} & 16.54\,{\scriptsize\up{3.30}}  & 21.96\,{\scriptsize\down{0.59}} \\
Previous FR Estimation & 76.50\,{\scriptsize\down{1.13}} & 44.58\,{\scriptsize\down{1.20}} & 13.24 & 23.15\,{\scriptsize\up{0.60}} \\
\midrule
% \ours~(EL penalty only) & \textbf{86.43}\,{\scriptsize\up{8.80}} & \textbf{233} & 46.99\,{\scriptsize\up{1.21}} & \textbf{796} & 13.87\,{\scriptsize\up{0.63}} & \textbf{720} & 22.96\,{\scriptsize\up{0.41}} & \textbf{764} \\
% \ours~(EL penalty + HL bonus) & 82.49\,{\scriptsize\up{4.86}} & 351 & 46.99\,{\scriptsize\up{1.21}} & 956 & 17.28\,{\scriptsize\up{4.04}} & 882 & 25.07\,{\scriptsize\up{2.52}} & 1062 \\
\ours & \textbf{84.53}\,{\scriptsize\up{6.90}} & \textbf{50.60}\,{\scriptsize\up{4.82}} & \textbf{19.85}\,{\scriptsize\up{6.61}} & \textbf{26.41}\,{\scriptsize\up{3.86}} \\
\bottomrule
\end{tabular}%
}
% \vspace{-2mm}
\end{table}

\begin{table}[t]
% \vspace{-2mm}
\centering
\caption{Per-difficulty-level average output tokens and accuracy (\%) on MATH-500 with Qwen2.5-Math-1.5B. \textcolor{green!50!black}{\small$\blacktriangle$} / \textcolor{red!70!black}{\small$\blacktriangledown$} denotes improvement / degradation relative to \textit{GRPO}.}
\label{tab:math_500_per_level}
\newcommand{\vsep}{\hskip 4pt{\color{black!30}\vrule width 0.4pt}\hskip 4pt}
\resizebox{\textwidth}{!}{%
\setlength{\tabcolsep}{4pt}
\begin{tabular}{l c@{\hskip 3pt}c @{\vsep} c@{\hskip 3pt}c @{\vsep} c@{\hskip 3pt}c @{\vsep} c@{\hskip 3pt}c @{\vsep} c@{\hskip 3pt}c @{\vsep} c@{\hskip 3pt}c}
\toprule
\multirow{2}{*}{Method} &
\multicolumn{2}{c}{Level 1} & \multicolumn{2}{c}{Level 2} &
\multicolumn{2}{c}{Level 3} & \multicolumn{2}{c}{Level 4} &
\multicolumn{2}{c}{Level 5} & \multicolumn{2}{c}{Overall} \\
\cmidrule(lr){2-3}\cmidrule(lr){4-5}\cmidrule(lr){6-7}\cmidrule(lr){8-9}\cmidrule(lr){10-11}\cmidrule(lr){12-13}
 & Acc & Tok & Acc & Tok & Acc & Tok & Acc & Tok & Acc & Tok & Acc & Tok \\
\midrule
\textit{GRPO} & 88.37 & 357 & 80.00 & 442 & 77.14 & 539 & 66.41 & 726 & 45.52 & 812 & 67.40 & 627 \\
% \midrule
% \multirow{2}{*}{\ours~(EL+HL)} & 88.37 & 271 & 83.33 & 433 & 83.81 & 523 & 71.09 & 715 & 47.01 & 792 & 71.60 & 622 \\
%  & --- & {\scriptsize\up{24\%}} & {\scriptsize\up{3.33}} & {\scriptsize\up{2\%}} & {\scriptsize\up{6.67}} & {\scriptsize\up{3\%}} & {\scriptsize\up{4.68}} & {\scriptsize\up{2\%}} & {\scriptsize\up{1.49}} & {\scriptsize\up{2\%}} & {\scriptsize\up{4.20}} & {\scriptsize\up{1\%}} \\
\midrule
% \multirow{2}{*}{\ours} & 90.37 & \textbf{212} & 84.33 & \textbf{290} & 80.10 & \textbf{365} & 66.84 & \textbf{569} & 46.76 & \textbf{680} & 70.10 & \textbf{475} \\
%  & {\scriptsize\up{2.00}} & {\scriptsize\up{41\%}} & {\scriptsize\up{4.33}} & {\scriptsize\up{34\%}} & {\scriptsize\up{2.96}} & {\scriptsize\up{32\%}} & {\scriptsize\up{0.43}} & {\scriptsize\up{22\%}} & {\scriptsize\up{1.24}} & {\scriptsize\up{16\%}} & {\scriptsize\up{2.70}} & {\scriptsize\up{24\%}} \\
\multirow{2}{*}{\ours} 
& \textbf{87.45} & 356 
& \textbf{84.62} & 428 
& \textbf{84.40} & 518 
& \textbf{70.93} & 689 
& \textbf{53.64} & 912 
& \textbf{72.60} & 601 \\
& {\scriptsize\down{0.92}} & {\scriptsize\up{0\%}} 
& {\scriptsize\up{4.62}} & {\scriptsize\up{3\%}} 
& {\scriptsize\up{7.26}} & {\scriptsize\up{4\%}} 
& {\scriptsize\up{4.52}} & {\scriptsize\up{5\%}} 
& {\scriptsize\up{8.12}} & {\scriptsize\down{12\%}} 
& {\scriptsize\up{5.20}} & {\scriptsize\up{4\%}} \\

\bottomrule
\end{tabular}
}
% \vspace{-2mm}
\end{table}

\paragraph{\ours's difficulty estimation is the most accurate among existing methods.}
As shown in \autoref{fig:difficulty_estimation_analysis} and \autoref{tab:difficulty_accuracy}, our policy-aligned SNIS estimator achieves the lowest error among all difficulty estimation methods during and after RL training, approaching the intrinsic error of \textit{Current FR Estimation}. This result shows that SNIS can effectively correct policy drift in difficulty estimation.

\paragraph{\ours's unified difficulty-adaptive RL framework improves both training efficiency and effectiveness.}
\autoref{fig:training_curves_7b} shows the training performance on MATH-500 and GSM8K with Qwen2.5-Math-7B. Compared with existing difficulty-aware RL methods, \ours achieves higher accuracy with fewer training steps and shorter wall-clock time. This result differs from the filtration-only results in \autoref{fig:training_curves}, where different methods converge to similar final accuracy, showing that the gains of \ours come from combining difficulty-guided dynamic data selection with difficulty-adaptive policy optimization rather than from data filtration alone. This advantage also transfers to SmolLM3-3B-Base, as shown in \appref{sec:ablation_smollm}, where \ours outperforms all baselines on MATH-500 in both training efficiency and final accuracy. Moreover, \autoref{tab:main_results_1p5b} reports final accuracy on four reasoning benchmarks, where \ours consistently achieves the best performance across methods and datasets. The coding transfer results in \autoref{sec:coder_transfer} further show that \ours also improves efficiency and effectiveness on LiveCodeBench, MBPP, and HumanEval. Finally, we compare the peak and average GPU and CPU memory usage of all RL methods. The results in \autoref{tab:prelim_memory_usage} show that \ours improves performance without increasing GPU memory usage.

\paragraph{\ours{} improves inference efficiency and effectiveness simultaneously.}
As shown in \autoref{tab:math_500_per_level}, compared with the GRPO baseline, \ours improves both overall inference efficiency and accuracy. The efficiency gains are most clear on easy prompts, where \ours uses fewer tokens while maintaining high correctness. We further provide a case study in \autoref{sec:case_study}, where \ours and GRPO answer the same MATH-500 question; \ours produces a shorter correct response for an easy prompt. These results show that reward shaping and dynamic compute allocation help the policy allocate reasoning effort more efficiently across difficulty levels.

% \paragraph{Estimator accuracy translates into faster convergence.}
% The convergence gap between IS and the other filtration-only rows in \autoref{fig:training_curves_7b} is consistent with the estimation-error ranking established in \autoref{sec:preliminary}: IS attains MAE $0.189$ in \autoref{tab:difficulty_accuracy}, below \textit{DOTS} ($0.266$), \textit{MoPPS} ($0.254$), \textit{LLM-Judge} ($0.269$), and \textit{Previous FR Estimation} ($0.289$), approaching the intrinsic $0.169$ of \textit{Current FR Estimation}. The ordering of the training curves in \autoref{fig:training_curves_7b} matches this MAE ranking, in line with Finding~5 in \autoref{sec:preliminary}. 

% A budget-level quantification of this convergence gap---including the formal definitions of steps-to-target, step speedup, and normalized $\mathrm{AUC}$, and the corresponding numbers on Qwen2.5-Math-1.5B and Qwen2.5-Math-7B---is reported in \appref{sec:training_efficiency}.

\subsection{Ablation Studies and Additional Investigations}
\label{sec:ablation}

% \paragraph{\ours component design ablation.} We conduct comprehensive ablation studies to the component designs in \ours, specifcally, we investigate the indivisual controbutions of our co-evolved difficulty estimation with self-normalized importance sampling (\textit{SNIS}), difficulty-guided dynamic data selection (\textit{dynamic data selection}), concise reasoning for easy prompts (\textit{easy penalty}), and exploration incentive for hard prompts (\textit{hard bonus}) using Qwen2.5-Math-1.5B, Qwen2.5-Math-7B and SmolLM3-3B-Base evaluted on MATH-500, GSM8K, AIME-AMC, and Minerva. The results shown in \autoref{tab:main_results} and \autoref{tab:ablation_1_5b} in the Appendix indicate the following observations: \ding{182} Each of our component design can effectively enhance the final performance; \ding{183} \ours achieves the best overll performance when all component are utilized, indicating the effect of our component designs are orthogonal.

\paragraph{\ours component-design ablation.}

\begin{table}[t]
% \vspace{-2mm}
\centering
\caption{Ablation of difficulty-adaptive training components on Qwen2.5-Math-7B across four benchmarks. Each row adds one component \emph{independently} on top of the \textit{GRPO} baseline; the bottom row \emph{\ours} combines all components. \textcolor{green!50!black}{\small$\blacktriangle$} / \textcolor{red!70!black}{\small$\blacktriangledown$} denotes improvement / degradation over \textit{GRPO}.}
% \appref{sec:ablation_1.5b} and \appref{sec:ablation_smollm}.}
\label{tab:main_results}
\resizebox{\textwidth}{!}{%
\begin{tabular}{lcccccc}
\toprule
Method & MATH-500 & GSM8K & AIME-AMC & Minerva & Overall  \\
\midrule
\textit{GRPO} & 66.8 & 77.6 & 45.8 & 13.2 & 50.9 \\
\quad + \textit{SNIS} & 75.8\,{\scriptsize\up{9.0}} & 88.7\,{\scriptsize\up{11.1}} & 51.5\,{\scriptsize\up{5.7}} & 16.3\,{\scriptsize\up{3.1}} & 58.1\,{\scriptsize\up{7.2}}\\
\quad + \iconeasy\iconmedium\iconhard \textit{dynamic data selection} & 77.0\,{\scriptsize\up{10.2}} & 91.0\,{\scriptsize\up{13.4}} & 51.8\,{\scriptsize\up{6.0}} & 17.0\,{\scriptsize\up{3.8}} & 59.2\,{\scriptsize\up{8.3}} \\
% \quad + \iconhard Think-longer re-rollout & 77.3\,{\scriptsize\up{10.5}} & 88.9\,{\scriptsize\up{11.3}} & 53.0\,{\scriptsize\up{7.2}} & 17.0\,{\scriptsize\up{3.8}} & 59.1\,{\scriptsize\up{8.2}} \\
% \quad + \iconhard Memory-guided hint & 76.0\,{\scriptsize\up{9.2}} & 89.9\,{\scriptsize\up{12.3}} & 53.0\,{\scriptsize\up{7.2}} & 18.0\,{\scriptsize\up{4.8}} & 59.2\,{\scriptsize\up{8.3}} \\
\quad + \iconeasy \textit{easy penalty} & 74.8\,{\scriptsize\up{8.0}} & \textbf{93.0}\,{\scriptsize\up{15.4}} & 51.8\,{\scriptsize\up{6.0}} & 14.5\,{\scriptsize\up{1.3}} & 58.5\,{\scriptsize\up{7.6}} \\
\quad + \iconhard \textit{hard bonus} & 77.0\,{\scriptsize\up{10.2}} & 90.2\,{\scriptsize\up{12.6}} & 51.8\,{\scriptsize\up{6.0}} & 19.0\,{\scriptsize\up{5.8}} & 59.5\,{\scriptsize\up{8.6}} \\
% \quad + \iconeasy Relaxed upper clipping & 76.2\,{\scriptsize\up{9.4}} & 89.2\,{\scriptsize\up{11.6}} & 54.2\,{\scriptsize\up{8.4}} & \textbf{20.0}\,{\scriptsize\up{6.8}} & 59.9\,{\scriptsize\up{9.0}} \\
\cmidrule(lr){1-6}
\ours (all components) & \textbf{78.2}\,{\scriptsize\up{11.4}} & \textbf{90.2}\,{\scriptsize\up{12.6}} & \textbf{55.4}\,{\scriptsize\up{9.6}} & \textbf{19.9}\,{\scriptsize\up{6.7}} & \textbf{60.9}\,{\scriptsize\up{10.0}} \\
\bottomrule
\end{tabular}%
}
% \vspace{-2mm}
\end{table}

We conduct ablation studies on the main components of \ours: co-evolved difficulty estimation with self-normalized importance sampling (\textit{SNIS}), difficulty-guided dynamic data selection (\textit{dynamic data selection}), concise reasoning for easy prompts (\textit{easy penalty}), and exploration incentive for hard prompts (\textit{hard bonus}). We evaluate these components with Qwen2.5-Math-1.5B and Qwen2.5-Math-7B on MATH-500, GSM8K, AIME-AMC, and Minerva. As shown in \autoref{tab:main_results} and \autoref{tab:ablation_1_5b}, we observe that \ding{182} each component improves final performance individually, and \ding{183} \ours achieves the best overall performance when all components are combined, indicating that these designs provide complementary gains rather than being redundant.

% \paragraph{Additional Invesitgations and Hyperparameter tuning.} \autoref{tab:el_hl_sweep} show the hyperparameter tuning results with $\lambda_{easy}$ and $\lambda_{hard}$, \autoref{tab:clip_sweep} show the tuning results of the clip hyperparameters $\epsilon_q^{-}$ and $\epsilon_q^{+}$. The results indicate that: \ding{182} A relatively small and symmetric coefficient of $\lambda_{easy}$ and $\lambda_{hard}$ of $1\time 10^{-4}$ achieves the best trade-off between effeciency and effectiveness; \ding{183} Relaxing only the upper bound $1+\epsilon_q^{+}$ to $1.6$ while keep the lower bound $1-\epsilon_q^{-}$ tight yields the best overall accuracy.

\paragraph{Additional investigations and hyperparameter tuning.}
\autoref{tab:el_hl_sweep} reports the sweep over the reward-shaping coefficients $\lambda_{\mathrm{easy}}$ and $\lambda_{\mathrm{hard}}$, while \autoref{tab:clip_sweep} reports the sweep over the clipping parameters $\epsilon_q^{-}$ and $\epsilon_q^{+}$. The results show that \ding{182} small and balanced coefficients, $\lambda_{\mathrm{easy}}=\lambda_{\mathrm{hard}}=1\times 10^{-4}$, provide the best trade-off between efficiency and effectiveness; and \ding{183} relaxing only the upper bound $1+\epsilon_q^{+}$ to $1.6$ while keeping the lower bound $1-\epsilon_q^{-}$ tight yields the best overall accuracy.

%% file: sec/6_conclusion.tex
\section{Conclusion}
\label{sec:conclusion}
We presented \ours, a difficulty-adaptive RL framework for efficient LLM reasoning. \ours addresses the limits of filtration-only methods by combining policy-aligned difficulty estimation, dynamic data selection, and difficulty-specific policy optimization. Across multiple models and benchmarks, \ours improves training efficiency, final accuracy, and inference-token efficiency, showing that effective RL should adapt both data selection and training signals to prompt difficulty.

%% file: sec/appendix.tex
\appendix

%% ─────────────────────────────────────────────────────────────────────
%% Appendix Table of Contents
%% ─────────────────────────────────────────────────────────────────────
{
\hypersetup{linkcolor=blue!70!black}

% Custom commands for appendix TOC entries with dotted leaders
\newcommand{\apptocline}[3]{%
  \noindent\hyperref[#3]{\textbf{#1}\hspace{0.6em}#2\,\dotfill\,\pageref{#3}}\par\vspace{1pt}%
}
\newcommand{\apptocsubline}[3]{%
  \noindent\hspace{1.5em}\hyperref[#3]{#1\hspace{0.5em}#2\,\dotfill\,\pageref{#3}}\par\vspace{1pt}%
}

\section*{Appendix Content}
\vspace{0.5em}

\apptocline{A}{Related Works}{sec:related}
\apptocline{B}{Proof of Proposition~\ref{prop:snis_finite_sample}}{sec:app_snis_finite_sample_proof}
\apptocline{C}{Full Algorithm}{sec:algorithm}
\apptocline{D}{Prompt Templates}{sec:prompt_templates}
\apptocline{E}{Difficulty Estimation Baselines}{sec:difficulty_estimation_baselines}
\apptocline{F}{Experimental Details}{sec:experimental_details}
\apptocline{G}{Extended Preliminary Investigation}{sec:comprehensive_preliminary_results}
\apptocline{H}{Ablation Studies and Additional Investigations}{sec:additional_invesitgations}
\apptocsubline{H.1}{Effective Sample Size of the SNIS Estimator}{sec:ess_threshold}
\apptocsubline{H.2}{Case Study: Stratified Generation Behavior}{sec:case_study}
\apptocsubline{H.3}{Entropy--Difficulty Relationship Across Models}{sec:entropy_difficulty}
\apptocsubline{H.4}{Ablation Results on Qwen2.5-Math-1.5B}{sec:ablation_1_5b}
\apptocsubline{H.5}{Additional Results on SmolLM3-3B}{sec:ablation_smollm}
\apptocline{I}{Generalization to Code Generation}{sec:coder_transfer}
\apptocline{J}{Training Efficiency Analysis}{sec:training_efficiency}
\apptocline{K}{Accuracy Under Constrained Rollout Budgets}{sec:budget_accuracy_appendix}
\apptocline{L}{Impact Statement}{sec:impact}
}

\clearpage

\section{Related Works}
\label{sec:related}
\paragraph{Reinforcement Learning for LLM Reasoning.}
With the widespread application of LLMs across various tasks \cite{zhou2025led, zhou2025m}, 
reinforcement learning has become a central approach for improving the reasoning capabilities of large language models. PPO~\cite{schulman2017proximal} and GRPO~\cite{shao2024deepseekmath} are among the most widely adopted algorithms, with GRPO eliminating the need for a separate value network by using group-relative advantages. These algorithms have driven significant progress on mathematical reasoning~\cite{shao2024deepseekmath,guo2025deepseek,cai2025role,jin2026reasoning,yuan2025mitigating,yuan2026behavior}, code generation~\cite{jain2025livecodebench_iclr,jin2025your}, and scientific question answering~\cite{rein2023gpqa,liu2026explainable}. However, RL training for LLMs remains costly and sample inefficient~\citep{yue2025does,tang2026highdataeff,liuleveraging}, since many generated rollouts produce near-zero advantages and contribute little to policy improvement. This inefficiency has motivated research into more effective data selection and compute allocation strategies.

\paragraph{Online Data Selection.}

A growing body of work shows that uniform prompt sampling wastes substantial compute on prompts that are either trivially easy or intractably hard for the current policy, both of which yield collapsed advantage signals~\cite{bae2026onlinedifficulty,li2025knapsackrl,zhang2025speedrl}. The prevailing strategy is to prioritize prompts of moderate difficulty, where the training signal is maximized~\cite{yu2025dapo,sun2025dots,zeng2026cures,bae2026onlinedifficulty,zhang2025speedrl}. Several methods further organize difficulty progression across training stages~\cite{chen2025selfevolving,parashar2026curriculum,shi2025efficientrl}, with extensions to code, multimodal, and writing tasks~\cite{deng2025boostingvlm,lei2025writingrl,wen2025lightr1}. Complementary work demonstrates that sample informativeness matters more than dataset scale~\cite{li2025limr,ye2025limo}, while theoretical analyses justify moderate-difficulty selection from the perspectives of advantage variance and gradient signal-to-noise ratio~\cite{bae2026onlinedifficulty,zeng2026cures,li2025knapsackrl}. Among these, DOTS~\cite{sun2025dots} is a particularly relevant baseline that combines an embedding-based difficulty predictor with rollout replay. However, prior studies also show that training exclusively on either easy or hard data does not consistently improve performance across the full range of task difficulties~\cite{parashar2026curriculum,kordi2026revisiting,yang2024can}, suggesting that focusing solely on moderate-difficulty prompts may be similarly insufficient. \ours goes beyond moderate-difficulty filtering by introducing co-evolved difficulty estimation that maintains accuracy under policy drift, and difficulty-adaptive training strategies that apply tailored objectives across easy, medium, and hard prompts to jointly improve inference efficiency and final effectiveness.

\paragraph{Difficulty Estimation.}

Accurate difficulty estimation is central to effective data selection, yet existing methods face a fundamental trade-off between cost and accuracy. Embedding-based predictors~\cite{sun2025dots} estimate difficulty via similarity to a fixed prompt space but cannot adapt as the policy evolves. Lightweight proxies based on entropy, perplexity, or uncertainty~\cite{pang2026edco,tang2026highdataeff,zhao2025uforl,shi2026intrinsic} are inexpensive but provide only coarse approximations. Bayesian posterior and bandit-style approaches~\cite{qu2025promptdifficulty,hu2025vade,shen2026bots} offer principled online estimation but typically require dedicated evaluation rollouts. Recent work also explores internal model representations or LLM-as-judge methods as difficulty signals~\cite{lee2025probing,zhu2025llmalreadyknows,tabib2025toward}. Despite these advances, many estimators rely on proxy signals or stale rollout statistics that become increasingly inaccurate as the policy changes during RL, as demonstrated in \autoref{sec:preliminary}.

\section{Proof of Proposition~\ref{prop:snis_finite_sample}}
\label{sec:app_snis_finite_sample_proof}

\begin{proof}
All probabilities and expectations below are conditional on the fixed estimation epoch, current policy, behavior policies, and clipping rule described before the proposition; we omit this conditioning to keep notation light. Write \(\mathbb{E}_k[\cdot]\triangleq\mathbb{E}_{O\sim\mu_k(\cdot\mid q)}[\cdot]\). Fix the prompt \(q\) throughout and write
\[
Y_k=Y(O_k),\qquad \omega_k=W_k(O_k),\qquad d_\theta=d_\theta(q).
\]
By absolute continuity, the likelihood ratio \(\rho_k(O)=\frac{d\pi_\theta(\cdot\mid q)}{d\mu_k(\cdot\mid q)}(O)\) satisfies the change-of-measure identities
\begin{equation}
\label{eq:app_change_of_measure_reward}
\mathbb{E}_{O\sim\mu_k(\cdot\mid q)}[\rho_k(O)Y(O)]
=
\mathbb{E}_{O\sim\pi_\theta(\cdot\mid q)}[Y(O)]
=
d_\theta,
\end{equation}
and
\begin{equation}
\label{eq:app_change_of_measure_weight}
\mathbb{E}_{O\sim\mu_k(\cdot\mid q)}[\rho_k(O)]=1.
\end{equation}

Define the empirical numerator and denominator
\begin{equation}
\label{eq:app_A_B}
A=\frac{1}{K}\sum_{k=1}^{K}\omega_kY_k,
\qquad
B=\frac{1}{K}\sum_{k=1}^{K}\omega_k,
\end{equation}
and their expectations
\begin{equation}
\label{eq:app_a_b}
a=\frac{1}{K}\sum_{k=1}^{K}
\mathbb{E}_{O\sim\mu_k(\cdot\mid q)}[W_k(O)Y(O)],
\qquad
b=\frac{1}{K}\sum_{k=1}^{K}
\mathbb{E}_{O\sim\mu_k(\cdot\mid q)}[W_k(O)].
\end{equation}
On the event \(B>0\), \(\hat d_q=A/B\), and the denominator assumption gives \(b\ge b_{\min}\).

Since \(0\le \omega_kY_k\le C_w\) and \(0\le \omega_k\le C_w\), the conditional independence assumption allows Hoeffding's inequality for independent bounded random variables, giving
\begin{equation}
\label{eq:app_hoeffding_A}
\mathbb{P}\left(|A-a|\ge \varepsilon\right)
\le
2\exp\left(-\frac{2K\varepsilon^2}{C_w^2}\right),
\end{equation}
and
\begin{equation}
\label{eq:app_hoeffding_B}
\mathbb{P}\left(|B-b|\ge \varepsilon\right)
\le
2\exp\left(-\frac{2K\varepsilon^2}{C_w^2}\right).
\end{equation}
Set \(\varepsilon=\varepsilon_\delta=C_w\sqrt{\log(4/\delta)/(2K)}\). A union bound implies that, with probability at least \(1-\delta\),
\begin{equation}
\label{eq:app_good_event}
|A-a|\le \varepsilon_\delta,
\qquad
|B-b|\le \varepsilon_\delta.
\end{equation}
On this event, \(\varepsilon_\delta<b_{\min}\) implies
\begin{equation}
\label{eq:app_denominator_lower}
B\ge b-\varepsilon_\delta\ge b_{\min}-\varepsilon_\delta>0.
\end{equation}
Moreover, because \(0\le Y\le 1\), we have \(0\le a\le b\). Therefore,
\begin{align}
\left|\frac{A}{B}-\frac{a}{b}\right|
&=
\left|
\frac{b(A-a)-a(B-b)}{bB}
\right| \nonumber\\
&\le
\frac{b|A-a|+a|B-b|}{bB} \nonumber\\
&\le
\frac{2\varepsilon_\delta}{B}
\le
\frac{2\varepsilon_\delta}{b_{\min}-\varepsilon_\delta}.
\label{eq:app_ratio_concentration}
\end{align}

It remains to compare \(a/b\) with the current-policy difficulty \(d_\theta\). Using \autoref{eq:app_change_of_measure_reward} and \autoref{eq:app_change_of_measure_weight},
\begin{equation}
\label{eq:app_a_decomp}
a
=
d_\theta
-
\frac{1}{K}\sum_{k=1}^{K}
\mathbb{E}_{O\sim\mu_k(\cdot\mid q)}
\left[(\rho_k(O)-W_k(O))Y(O)\right],
\end{equation}
and
\begin{equation}
\label{eq:app_b_decomp}
b
=
1
-
\frac{1}{K}\sum_{k=1}^{K}
\mathbb{E}_{O\sim\mu_k(\cdot\mid q)}
\left[\rho_k(O)-W_k(O)\right].
\end{equation}
Let
\[
\begin{aligned}
S&=
\frac{1}{K}\sum_{k=1}^{K}
\mathbb{E}_{O\sim\mu_k(\cdot\mid q)}
\left[\rho_k(O)-W_k(O)\right],\\
T&=
\frac{1}{K}\sum_{k=1}^{K}
\mathbb{E}_{O\sim\mu_k(\cdot\mid q)}
\left[(\rho_k(O)-W_k(O))Y(O)\right].
\end{aligned}
\]
Then \(b=1-S\), \(a=d_\theta-T\), and \(b\ge b_{\min}\). Since \(Y(O)\in[0,1]\), we also have \(d_\theta\in[0,1]\), and hence \(|d_\theta-Y(O)|\le1\). Thus
\begin{align}
\left|\frac{a}{b}-d_\theta\right|
&=
\left|
\frac{d_\theta-T}{1-S}-d_\theta
\right| \nonumber\\
&=
\frac{|d_\theta S-T|}{1-S}
\le
\frac{1}{b_{\min}K}
\sum_{k=1}^{K}
\mathbb{E}_{O\sim\mu_k(\cdot\mid q)}
\left[
\left|\rho_k(O)-W_k(O)\right|
\left|d_\theta-Y(O)\right|
\right] \nonumber\\
&\le
\frac{1}{b_{\min}K}
\sum_{k=1}^{K}
\mathbb{E}_{O\sim\mu_k(\cdot\mid q)}
\left[
\left|\rho_k(O)-W_k(O)\right|
\right].
\label{eq:app_clipped_target_gap}
\end{align}
Combining \eqref{eq:app_ratio_concentration} and \eqref{eq:app_clipped_target_gap} by the triangle inequality gives \eqref{eq:finite_sample_bound}. Finally, for the two-sided log clipping in \autoref{eq:is_weights},
\[
W_k(O)=\min\{\max\{\rho_k(O),e^{-c}\},e^c\},
\]
and therefore
\[
\left|\rho_k(O)-W_k(O)\right|
=
(\rho_k(O)-e^c)_+ +(e^{-c}-\rho_k(O))_+,
\]
which yields the stated log-clipping special case.
\end{proof}

\section{Full Algorithm}
\label{sec:algorithm}

The complete procedure of \ours is summarized in \autoref{alg:ours}. At the start of training, a small reference set is rolled out to bootstrap embedding-based difficulty estimates for unseen prompts (lines 2--4). In each epoch, \ours first updates the difficulty estimate $\hat{d}_q$ for every prompt: cold-start prompts use similarity-weighted interpolation over the reference set, while prompts with buffered rollouts are corrected for policy drift via clipped self-normalized importance sampling (lines 7--14). These estimates then define a Beta-shaped sampling distribution that concentrates each training batch on medium-difficulty prompts while retaining nonzero probability across the full difficulty spectrum (lines 15--16). Within the sampled batch, each prompt is routed to one of three optimization regimes according to its estimated difficulty: easy prompts receive fewer rollouts and a length penalty that rewards concise correct reasoning; medium prompts undergo standard GRPO updates; and hard prompts are allocated additional rollouts—partly hint-augmented—together with a length bonus that encourages deeper exploration (lines 18--30). Finally, the policy is updated on a mixture of fresh and replayed trajectories with difficulty-conditioned clipping, and new rollouts are pushed into the replay buffer for future reuse (lines 31--33).

\begin{algorithm}[!ht]
\caption{\ours: Difficulty-Adaptive Reinforcement Learning with Co-Evolved Difficulty Estimation}
\label{alg:ours}
\begin{algorithmic}[1]
\REQUIRE Training set $\mathcal{D}$, reference set $\mathcal{D}_{\mathrm{ref}}$, initial policy $\pi_\theta$, replay buffer $\mathcal{B}\leftarrow\emptyset$, Beta concentration $\kappa$, thresholds $d_{\mathrm{easy}}, d_{\mathrm{hard}}$, rollout counts $G, G_{\mathrm{easy}}, G_{\mathrm{hard}}$, replay mix ratio $\sigma$, clipping bound $c$
\STATE \textcolor{gray}{\texttt{/* Cold-start: initialize reference difficulties */}}
\FOR{each reference prompt $p_i \in \mathcal{D}_{\mathrm{ref}}$}
    \STATE Generate $G$ rollouts from $\pi_\theta(\cdot \mid p_i)$; compute $d_{p_i} \leftarrow 1 - \bar{r}_{p_i}$ \COMMENT{Eq.~\eqref{eq:success_rate_difficulty}}
\ENDFOR
\FOR{each training epoch}
    \STATE \textcolor{gray}{\texttt{/* Phase 1: Co-Evolved Difficulty Estimation (\S\ref{sec:is_estimation}) */}}
    \FOR{each prompt $q \in \mathcal{D}$}
        \IF{$\mathcal{B}_q = \emptyset$}
            \STATE Compute embedding similarities $a_i$; set $\hat{d}_q \leftarrow \sum_i a_i\, d_{p_i}$ \COMMENT{Eq.~\eqref{eq:embedding_weight_sum}}
        \ELSE
            \FOR{each rollout $(o_k, r_k, \log\pi_{\theta_{\mathrm{beh},k}})$ in $\mathcal{B}_q$}
                \STATE $w_k \leftarrow \exp\!\bigl(\mathrm{clip}(\log\pi_\theta(o_k\mid q) - \log\pi_{\theta_{\mathrm{beh},k}}(o_k\mid q),\, {-c},\, c)\bigr)$ \COMMENT{Eq.~\eqref{eq:is_weights}}
            \ENDFOR
            \STATE $\hat{d}_q \leftarrow \sum_k w_k(1-r_k)\big/\sum_k w_k$ 
        \ENDIF
    \ENDFOR
    \STATE \textcolor{gray}{\texttt{/* Phase 2: Difficulty-Guided Dynamic Data Selection (\S\ref{sec:sample_selection}) */}}
    \STATE Set $p_{\ours}(q) \propto \mathrm{Beta}(\hat{d}_q;\, 1+\kappa/2,\, 1+\kappa/2)$ for all $q$ \COMMENT{Eq.~\eqref{eq:beta_sampling}}
    \STATE Sample batch $\mathcal{Q}$ from $\mathcal{D}$ without replacement according to $p_{\ours}$
    \STATE \textcolor{gray}{\texttt{/* Phase 3: Difficulty-Adaptive Policy Optimization (\S\ref{sec:stratified_rl}) */}}
    \FOR{each prompt $q \in \mathcal{Q}$}
        \STATE Assign tier: \textbf{easy} if $\hat{d}_q < d_{\mathrm{easy}}$;\ \textbf{hard} if $\hat{d}_q > d_{\mathrm{hard}}$;\ \textbf{medium} otherwise
        \STATE \textcolor{gray}{\texttt{/* }\iconeasybig\ \texttt{Easy: Concise reasoning */}}
        \IF{$\hat{d}_q < d_{\mathrm{easy}}$}
        \STATE Set $G_q \leftarrow G_{\mathrm{easy}}$, impose shorter response budget
        \STATE Shape reward: $\tilde{r}_i \leftarrow r_i - \mathbf{1}_{r_i=1}\,\lambda_{\mathrm{easy}}\, w_q^{\mathrm{easy}}\, |o_i|/T_{\max}$ \COMMENT{Eq.~\eqref{eq:easy_shaped_reward}}
        \STATE Set $\epsilon_q^{-} \leftarrow \epsilon$,\ $\epsilon_q^{+} > \epsilon$ \COMMENT{relaxed upper clip}
        \ENDIF
        \STATE \textcolor{gray}{\texttt{/* }\iconmediumbig\ \texttt{Medium: Standard GRPO */}}
        \IF{$d_{\mathrm{easy}} \leq \hat{d}_q \leq d_{\mathrm{hard}}$}
        \STATE Set $G_q \leftarrow G$,\ $\tilde{r}_i \leftarrow r_i$,\ $\epsilon_q^{-} \leftarrow \epsilon_q^{+} \leftarrow \epsilon$
        \ENDIF
        \STATE \textcolor{gray}{\texttt{/* }\iconhardbig\ \texttt{Hard: Exploration incentive */}}
        \IF{$\hat{d}_q > d_{\mathrm{hard}}$}
        \STATE Set $G_q \leftarrow G_{\mathrm{hard}}$; generate $G$ standard $+$ $(G_q - G)$ hint-augmented rollouts
        \STATE Shape reward: $\tilde{r}_i \leftarrow r_i + \mathbf{1}_{r_i=0}\,\lambda_{\mathrm{hard}}\, w_q^{\mathrm{hard}}\, |o_i|/T_{\max}$ \COMMENT{Eq.~\eqref{eq:hard_shaped_reward}}
        \ENDIF
        \STATE Compute group-relative advantages $A_i$ from $\{\tilde{r}_i\}_{i=1}^{G_q}$
    \ENDFOR
    \STATE Construct batch: mix fraction $\sigma$ fresh on-policy rollouts with $(1-\sigma)$ replay trajectories
    \STATE Update $\theta$ by maximizing $\mathcal{J}_{\ours}(\theta)$ with difficulty-conditioned clipping \COMMENT{Eq.~\eqref{eq:ours_grpo}}
    \STATE Push new rollouts into $\mathcal{B}$ (FIFO eviction)
\ENDFOR
\RETURN Optimized policy $\pi_\theta$
\end{algorithmic}
\end{algorithm}

\section{Prompt Templates}
\label{sec:prompt_templates}

All difficulty levels share the same system prompt. For hard prompts, additional hints are appended to the user content.

\paragraph{System prompt (all difficulty levels).}
\begin{quote}
\texttt{Let's think step by step and output the final answer within \textbackslash boxed\{\}.}
\end{quote}
For GSM8K-style tasks, the system prompt is:
\begin{quote}
\texttt{Let's think step by step and output the final answer after "\#\#\#\#".}
\end{quote}

\paragraph{Think-longer rollout.}
The following hint is appended to the user content for hard prompts ($\hat{d}_q > d_{\mathrm{hard}}$):
\begin{quote}
\texttt{This is a challenging problem. Please think longer and more carefully. Break it down step by step before giving your final answer.}
\end{quote}

\paragraph{Hint-augmented Prompt.}
For hard prompts with a successful historical trajectory available in the replay buffer, the user content is augmented with a reasoning skeleton extracted from the successful response:
\begin{quote}
\texttt{[Original question]}\\[4pt]
\texttt{Helpful hint from one previous successful attempt on this same problem:}\\
\texttt{- Guided reasoning skeleton:}\\
\texttt{\quad 1. [Top-scoring fragment 1]}\\
\texttt{\quad 2. [Top-scoring fragment 2]}\\
\texttt{\quad 3. [Top-scoring fragment 3]}\\[4pt]
\texttt{Use the hint as guidance, but still reason carefully and give your own complete solution within \textbackslash boxed\{\}.}
\end{quote}

\section{Difficulty Estimation Baselines}
\label{sec:difficulty_estimation_baselines}

All methods below share the same prompt selection rule: given predicted difficulty labels $\hat{d}_q$ for every prompt $q \in \mathcal{D}$, sampling weights are computed as $p(q) \propto \mathrm{Beta}(\hat{d}_q;\,\alpha,\alpha)$ with $\alpha = 1 + \kappa/2$ (Eq.~\eqref{eq:beta_sampling}), and $B$ prompts are drawn via multinomial sampling. The methods differ only in how $\hat{d}_q$ is obtained.

%%% --- 1. Random Selection ---
\paragraph{Random Selection.}
No difficulty estimation is performed. All prompts are assigned a uniform predicted label $\hat{d}_q = 0.5$, and $B$ prompts are drawn uniformly at random from $\mathcal{D}$ without replacement.

\begin{algorithm}[H]
\caption{\textit{Random Selection}}
\label{alg:random}
\begin{algorithmic}[1]
\REQUIRE Training prompts $\mathcal{D}$, selection budget $B$
\FOR{each training step}
  \STATE $\hat{d}_q \gets 0.5 \quad \forall\, q \in \mathcal{D}$
  \STATE Draw $B$ prompts uniformly at random: $\mathcal{S} \gets \textsc{RandomSample}(\mathcal{D}, B)$
  \STATE Roll out $G$ responses per prompt in $\mathcal{S}$; proceed to GRPO update
\ENDFOR
\end{algorithmic}
\end{algorithm}

%%% --- 2. Embedding-Based Prediction ---
\paragraph{Embedding-Based Prediction.}
This method follows the teacher-model framework of \citet{sun2025dots}. At the beginning of each training step, a reference set $\mathcal{R} \subset \mathcal{D}$ of $n_{\mathrm{ref}}$ prompts is sampled uniformly and rolled out with the current policy $\pi_\theta$ to obtain ground-truth success rates. A lightweight teacher model $f_\phi$---a 3-layer residual MLP head attached to the frozen last-hidden-state embeddings of Qwen2.5-Math-1.5B-Instruct---computes the cosine similarity between each remaining prompt and the reference set in embedding space, and predicts the success rate via group-logit-temperature scaling. The predicted difficulty is $\hat{d}_q = 1 - \hat{s}_q^{\mathrm{teacher}}$.

\begin{algorithm}[H]
\caption{\textit{Embedding-Based Prediction}~\citep{sun2025dots}}
\label{alg:embedding}
\begin{algorithmic}[1]
\REQUIRE Training prompts $\mathcal{D}$, teacher model $f_\phi$, reference size $n_{\mathrm{ref}}$, selection budget $B$
\FOR{each training step}
  \STATE Sample reference set $\mathcal{R} \subset \mathcal{D}$, $|\mathcal{R}| = n_{\mathrm{ref}}$
  \STATE Roll out $G$ responses per prompt in $\mathcal{R}$; compute ground-truth $s_q$ for $q \in \mathcal{R}$
  \STATE Extract embeddings $\mathbf{h}_q$ for all $q \in \mathcal{D}$ from the frozen encoder
  \FOR{each $q \in \mathcal{D} \setminus \mathcal{R}$}
    \STATE $\hat{s}_q^{\mathrm{teacher}} \gets f_\phi(\mathbf{h}_q,\, \{\mathbf{h}_{q'}, s_{q'}\}_{q' \in \mathcal{R}})$
    \STATE $\hat{d}_q \gets 1 - \hat{s}_q^{\mathrm{teacher}}$
  \ENDFOR
  \STATE For $q \in \mathcal{R}$: $\hat{d}_q \gets 1 - s_q$
  \STATE Select $B$ prompts via Beta sampling on $\{\hat{d}_q\}$
\ENDFOR
\end{algorithmic}
\end{algorithm}

%%% --- 3. Entropy-Based Estimation ---
\paragraph{Entropy-Based Estimation.}
Following \citet{pang2026edco}, this method uses the token-level entropy of the current policy as a proxy for difficulty. For each prompt $q$, a single short prefix of $L_{\mathrm{prefix}}$ tokens is generated from $\pi_\theta$ at temperature $T_{\mathrm{ent}}$. The token-level entropy $H_t = -\sum_v \pi_\theta(v \mid q, o_{<t}) \log \pi_\theta(v \mid q, o_{<t})$ is computed at each generated position, and the prompt-level score is the mean $\bar{H}_q = \frac{1}{L_{\mathrm{prefix}}} \sum_{t=1}^{L_{\mathrm{prefix}}} H_t$. Prompts are then selected via softmax sampling with temperature $\tau_{\mathrm{ent}}$: $p(q) \propto \exp(\bar{H}_q / \tau_{\mathrm{ent}})$, which favors high-entropy (uncertain) prompts.

\begin{algorithm}[H]
\caption{\textit{Entropy-Based Estimation}~\citep{pang2026edco}}
\label{alg:entropy}
\begin{algorithmic}[1]
\REQUIRE Training prompts $\mathcal{D}$, prefix length $L_{\mathrm{prefix}}$, temperature $T_{\mathrm{ent}}$, softmax temperature $\tau_{\mathrm{ent}}$, budget $B$
\FOR{each training step}
  \FOR{each $q \in \mathcal{D}$}
    \STATE Generate prefix $o_{1:L_{\mathrm{prefix}}} \sim \pi_\theta(\cdot \mid q)$ with temperature $T_{\mathrm{ent}}$
    \STATE Compute $\bar{H}_q \gets \frac{1}{L_{\mathrm{prefix}}} \sum_{t=1}^{L_{\mathrm{prefix}}} H_t$
  \ENDFOR
  \STATE $p(q) \propto \exp\!\bigl(\bar{H}_q \,/\, \tau_{\mathrm{ent}}\bigr)$
  \STATE Draw $B$ prompts via multinomial sampling from $p$
\ENDFOR
\end{algorithmic}
\end{algorithm}

%%% --- 4. LLM-Judge ---
\paragraph{LLM-Judge.}
Inspired by \citet{tabib2025toward}, this method uses the policy model $\pi_\theta$ itself to judge prompt difficulty. For each prompt $q$, a difficulty-estimation instruction is prepended as a system message, asking the model to output a success probability $\hat{s}_q \in [0, 1]$. The model generates a short response (up to 32 tokens) at temperature $T_{\mathrm{judge}} = 0.3$, and the first valid floating-point number in the output is parsed as $\hat{s}_q$. The predicted difficulty is $\hat{d}_q = 1 - \hat{s}_q$. Prompts for which parsing fails are assigned $\hat{d}_q = 0.5$.

\begin{algorithm}[H]
\caption{\textit{LLM-Judge}~\citep{tabib2025toward}}
\label{alg:llm_judge}
\begin{algorithmic}[1]
\REQUIRE Training prompts $\mathcal{D}$, policy $\pi_\theta$, judge temperature $T_{\mathrm{judge}}$, budget $B$
\FOR{each training step}
  \FOR{each $q \in \mathcal{D}$}
    \STATE Construct judge prompt: system instruction $+$ $q$
    \STATE Generate response $r \sim \pi_\theta(\cdot \mid \text{judge prompt})$ with $T = T_{\mathrm{judge}}$, max 32 tokens
    \STATE Parse $\hat{s}_q$ from $r$; set $\hat{s}_q \gets 0.5$ on parse failure
    \STATE $\hat{d}_q \gets 1 - \hat{s}_q$
  \ENDFOR
  \STATE Select $B$ prompts via Beta sampling on $\{\hat{d}_q\}$
\ENDFOR
\end{algorithmic}
\end{algorithm}

%%% --- 5. Bayesian Posterior Estimation ---
\paragraph{Bayesian Posterior Estimation.}
Following \citet{qu2025promptdifficulty}, each prompt $q$ is modeled as a Bernoulli arm with latent success rate $\gamma_q \sim \mathrm{Beta}(\alpha_q, \beta_q)$. The posterior is initialized with a uniform prior $(\alpha_0, \beta_0) = (1, 1)$. After each rollout batch where prompt $q$ receives $k$ rollouts with $s$ successes, the posterior is updated with temporal discounting factor $\lambda$:
\begin{equation}
\alpha_q \gets \lambda\,\alpha_q + (1-\lambda)\,\alpha_0 + s, \qquad
\beta_q \gets \lambda\,\beta_q + (1-\lambda)\,\beta_0 + (k - s).
\label{eq:bayesian_update}
\end{equation}
Difficulty is estimated via Thompson Sampling: $\tilde{\gamma}_q \sim \mathrm{Beta}(\alpha_q, \beta_q)$, and $\hat{d}_q = 1 - \tilde{\gamma}_q$. For prompts not yet covered by any rollout, a default label $\hat{d}_q = 0.5$ is assigned, and an exploration ratio $\rho_{\mathrm{explore}}$ controls the probability mass allocated to uncovered prompts.

\begin{algorithm}[H]
\caption{\textit{Bayesian Posterior Estimation}~\citep{qu2025promptdifficulty}}
\label{alg:bayesian}
\begin{algorithmic}[1]
\REQUIRE Training prompts $\mathcal{D}$, prior $(\alpha_0, \beta_0)$, decay $\lambda$, explore ratio $\rho_{\mathrm{explore}}$, budget $B$
\STATE Initialize $\alpha_q \gets \alpha_0,\; \beta_q \gets \beta_0 \quad \forall\, q \in \mathcal{D}$
\FOR{each training step}
  \FOR{each $q \in \mathcal{D}$ with rollout history}
    \STATE Draw $\tilde{\gamma}_q \sim \mathrm{Beta}(\alpha_q, \beta_q)$
    \STATE $\hat{d}_q \gets 1 - \tilde{\gamma}_q$
  \ENDFOR
  \STATE For uncovered prompts: $\hat{d}_q \gets 0.5$; allocate $\rho_{\mathrm{explore}}$ of sampling mass
  \STATE Select $B$ prompts via Beta sampling on $\{\hat{d}_q\}$
  \STATE Roll out $G$ responses per selected prompt; observe $(s_q, k_q)$
  \STATE Update posteriors via Eq.~\eqref{eq:bayesian_update}
\ENDFOR
\end{algorithmic}
\end{algorithm}

%%% --- 6. Previous FR Estimation ---
\paragraph{Previous FR Estimation.}
This method uses the empirical failure rate from previous rollouts stored in the replay buffer $\mathcal{B}$ as the difficulty estimate, without importance-weight correction. For each prompt $q$ with $K$ stored rollouts $\{r_k\}_{k=1}^{K}$ in $\mathcal{B}$, the predicted failure rate is $\hat{f}_q^{\mathrm{raw}} = 1 - \frac{1}{K}\sum_{k=1}^{K} r_k$, and the difficulty is $\hat{d}_q = \hat{f}_q^{\mathrm{raw}}$. For prompts without buffer entries, a default label $\hat{d}_q = 0.5$ is used. This is equivalent to the SNIS estimator (\autoref{sec:is_estimation}) with all importance weights set to unity, i.e., ignoring the distributional shift between the behavior policies and the current policy.

\begin{algorithm}[H]
\caption{\textit{Previous FR Estimation}}
\label{alg:prev_sr}
\begin{algorithmic}[1]
\REQUIRE Training prompts $\mathcal{D}$, replay buffer $\mathcal{B}$, budget $B$
\FOR{each training step}
  \FOR{each $q \in \mathcal{D}$ with entries in $\mathcal{B}$}
    \STATE Retrieve stored rollouts $\{r_k\}_{k=1}^{K}$ from $\mathcal{B}(q)$
    \STATE $\hat{f}_q^{\mathrm{raw}} \gets 1 - \frac{1}{K}\sum_{k=1}^{K} r_k$
    \STATE $\hat{d}_q \gets \hat{f}_q^{\mathrm{raw}}$
  \ENDFOR
  \STATE For prompts without buffer entries: $\hat{d}_q \gets 0.5$
  \STATE Select $B$ prompts via Beta sampling on $\{\hat{d}_q\}$
\ENDFOR
\end{algorithmic}
\end{algorithm}

%%% --- 7. Current FR Estimation ---
\paragraph{Current FR Estimation.}
This method serves as the ground-truth oracle for difficulty estimation. At each training step, every prompt $q \in \mathcal{D}$ is rolled out $G$ times with the current policy $\pi_\theta$ at temperature $T$, yielding the observed failure rate $f_q = 1 - \frac{1}{G}\sum_{i=1}^{G} r_i$. Only prompts whose observed failure rate falls within a target interval $[f_{\mathrm{lo}}, f_{\mathrm{hi}}]$ centered at 0.5 are retained; rollouts continue until $B$ qualifying prompts are collected. The difficulty is $\hat{d}_q = f_q$.

\begin{algorithm}[H]
\caption{\textit{Current FR Estimation} (Ground Truth)}
\label{alg:current_sr}
\begin{algorithmic}[1]
\REQUIRE Training prompts $\mathcal{D}$, policy $\pi_\theta$, group size $G$, target interval $[f_{\mathrm{lo}}, f_{\mathrm{hi}}]$, budget $B$
\FOR{each training step}
  \STATE $\mathcal{S} \gets \emptyset$
  \WHILE{$|\mathcal{S}| < B$}
    \STATE Sample a batch of prompts from $\mathcal{D}$
    \FOR{each sampled prompt $q$}
      \STATE Generate $G$ rollouts $\{o_i\} \sim \pi_\theta(\cdot \mid q)$; compute $f_q = 1 - \frac{1}{G}\sum_{i=1}^{G} r_i$
      \IF{$f_q \in [f_{\mathrm{lo}},\, f_{\mathrm{hi}}]$}
        \STATE $\mathcal{S} \gets \mathcal{S} \cup \{q\}$; $\hat{d}_q \gets f_q$
      \ENDIF
    \ENDFOR
  \ENDWHILE
  \STATE Proceed to GRPO update with $\mathcal{S}$
\ENDFOR
\end{algorithmic}
\end{algorithm}

\section{Experimental Details}
\label{sec:experimental_details}
\label{sec:preliminary_investigation_details}

% \subsection{Preliminary Investigation Details}

The preliminary investigation in \autoref{sec:preliminary} trains Qwen2.5-Math-1.5B~\citep{yang2024qwen2} on the same four-dataset union as the main experiments, following the data preparation of \citet{sun2025dots}: MATH Level~3--5~\citep{hendrycks2021measuring} ($8{,}888$ prompts merged from the train and test splits, with the MATH-500 evaluation set removed), DeepScaleR-40K~\citep{luo2025deepscaler} ($10{,}240$ prompts subsampled from 40K), Open-Reasoner-Zero-57K~\citep{hu2025openreasonerzero} ($8{,}192$ prompts subsampled from 57K), and DeepMath-103K~\citep{he2025deepmath} ($8{,}192$ prompts subsampled from 103K), for a total of $35{,}512$ prompts. Subsampling uses a fixed seed for reproducibility; each dataset is converted to a common prompt format with a shared system prompt and a rule-based verifiable reward. Training uses GRPO~\citep{shao2024deepseekmath} with the AdamW optimizer (learning rate $2 \times 10^{-6}$, no weight decay), group size $G = 8$, generation temperature $T = 0.6$, maximum response length $3{,}072$ tokens, and entropy coefficient $0$. Evaluation is performed on MATH-500~\citep{lightman2023let} and GSM8K~\citep{cobbe2021training}.

\paragraph{Shared parameters for teacher-based and filtration methods.}
\textit{Random Selection}, \textit{Embedding-Based Prediction}, \textit{Entropy-Based Estimation}, \textit{LLM-Judge}, and \textit{Current FR Estimation} use an effective batch size $B = 512$, on-policy fraction $\sigma = 0.5$ (i.e., 256 fresh rollouts and 256 replay samples per step), and a replay buffer of capacity $C = 512$. Prompt selection uses a Beta sampling distribution with concentration $\kappa = 100$ centered at $\alpha = 0.5$ (Eq.~\eqref{eq:beta_sampling}).

\paragraph{Shared parameters for buffer-based methods.}
\textit{Previous FR Estimation} and \textit{Bayesian Posterior Estimation} use $B = 512$, $\sigma = 1.0$ (fully on-policy), and a larger replay buffer $C = 4096$ to accumulate sufficient rollout history for posterior estimation.

\paragraph{Method-specific parameters.}
\begin{itemize}[leftmargin=*,nosep]
  \item \textbf{Embedding-Based Prediction}: reference set size $n_{\mathrm{ref}} = 256$; teacher model is a 3-layer residual MLP head on top of frozen Qwen2.5-Math-1.5B-Instruct embeddings, with group-logit-temperature scaling; teacher batch size 32.
  \item \textbf{Entropy-Based Estimation}: prefix length $L_{\mathrm{prefix}} = 64$ tokens, generation temperature $T_{\mathrm{ent}} = 0.6$, softmax temperature $\tau_{\mathrm{ent}} = 1.0$. The entropy-based selection replaces the Beta sampling rule with softmax sampling over raw entropy scores.
  \item \textbf{LLM-Judge}: the policy model itself generates difficulty predictions at temperature $T_{\mathrm{judge}} = 0.3$ with a maximum of 32 output tokens per prompt.
  \item \textbf{Bayesian Posterior Estimation}: uniform prior $(\alpha_0, \beta_0) = (1.0, 1.0)$, temporal decay $\lambda = 0.5$, target failure rate $\gamma^* = 0.5$, exploration ratio $\rho_{\mathrm{explore}} = 0.3$.
  \item \textbf{Previous FR Estimation}: uses the same replay buffer as the SNIS estimator ($C = 4096$, $\sigma = 1.0$) but computes the raw average failure rate without importance-weight correction.
  \item \textbf{Current FR Estimation}: rolls out $G = 8$ responses per prompt with the current policy at temperature $T = 0.6$; retains only prompts with observed failure rate $f_q \in [0.4, 0.6]$ (i.e., $f_q = 0.5$ given $G = 8$), continuing until $B$ qualifying prompts are collected.
\end{itemize}

\paragraph{SNIS difficulty estimation defaults (main experiments).}
Runs that use SNIS difficulty estimation (\autoref{sec:is_estimation}) in \autoref{sec:experiments} use the following defaults unless explicitly overridden in an ablation: replay buffer size $4{,}096$, Beta sampling concentration $\kappa = 100$, SNIS clipping range $c = 4.0$, ESS threshold $\tau_{\mathrm{ESS}} = 3$, EasyLength penalty coefficient $\lambda_{\mathrm{easy}} = 10^{-4}$, and hard length bonus coefficient $\lambda_{\mathrm{hard}} = 10^{-4}$ when reward shaping is enabled.

% \paragraph{Learning rate and scheduler selection.}
% The learning rate and scheduler used in \autoref{sec:experiments} were selected via a sweep over $\{2{\times}10^{-4}, 10^{-3}, 2{\times}10^{-3}\}$ with and without a cosine schedule; per-configuration training curves are reported in \appref{sec:loss_ablation}.

% \paragraph{AIME-AMC and AIME24/25 composition.}
% \textit{AIME-AMC}~\citep{aimo2024aime,aimo2024amc} is the AI-MO validation benchmark, consisting of $30$ historical AIME problems and $83$ AMC problems ($113$ problems in total). \textit{AIME24/25}~\citep{maa2024aime} is a separate $60$-problem benchmark that combines the $30$ AIME~2024 problems and the $30$ AIME~2025 problems. The two benchmarks draw from disjoint competition years and do not overlap. AIME24/25 is reported separately in \autoref{tab:el_hl_sweep} and \autoref{tab:clip_sweep}.

\section{Extended Preliminary Investigation}
\label{sec:comprehensive_preliminary_results}

\autoref{tab:difficulty_accuracy} reports the MSE and MAE of each difficulty estimation method on $200$ held-out prompts, measuring how close the selected prompts are to the target accuracy of $0.5$. The results show that \textit{Our Estimation} achieves the lowest error among all practical estimation methods, reducing MAE from $0.2675$ for the strongest existing baseline, \textit{Embedding-Based Prediction}, to $0.1894$. Its error is also close to the oracle \textit{Current FR Estimation}, indicating that our policy-aligned estimator better tracks the current policy difficulty during RL training.

\begin{table}[h]
\centering
\small
\caption{Estimation error (MSE and MAE) of the predicted accuracy distribution relative to the target accuracy of 0.5 on 200 held-out prompts. Lower is better.}
\label{tab:difficulty_accuracy}
\begin{tabular}{lcc}
\toprule
Method & MSE$\downarrow$ & MAE$\downarrow$ \\
\midrule
\textit{Random Selection} & 0.1624 & 0.3719 \\
\textit{Entropy-Based Estimation} & 0.1288 & 0.3120 \\
\textit{Previous FR Estimation} & 0.1622 & 0.2988 \\
\textit{LLM-Judge} & 0.1568 & 0.3688 \\
\textit{Bayesian Posterior Estimation} & 0.0964 & 0.2839 \\
\textit{Embedding-Based Prediction} & 0.0943 & 0.2775 \\
\textit{Current FR Estimation} (Ground Truth) & 0.0473 & 0.1688 \\
\midrule
\textit{Our Estimation} & 0.0654 & 0.1894 \\
\bottomrule
\end{tabular}
\end{table}

% \begin{table}[h]
% \centering
% \caption{Downstream evaluation accuracy across six benchmarks for different difficulty estimation methods trained on Qwen2.5-Math-1.5B.}
% \label{tab:difficulty_downstream}
% \small
% \resizebox{\textwidth}{!}{%
% \begin{tabular}{lcccccc}
% \toprule
% Method & AIME24/25 & MATH-500 & GSM8K & AIME-AMC & Minerva & Olympiad \\
% \midrule
% \rowcolor{gray!8}
% % Qwen2.5-Math-1.5B & 6.7 & 34.2 & 47.6 & 31.3 & 4.4 & 15.4 \\
% \textit{GRPO} & 11.7 & 66.8 & 77.6 & 45.8 & 13.2 & 22.6 \\
% \textit{LLM-Judge} & 8.3 & 69.8 & 79.1 & 42.2 & 13.2 & 24.0 \\
% \textit{DOTS} & 6.7 & 69.4 & 79.4 & 45.8 & 15.4 & 23.7 \\
% \textit{EDCO} & 5.0 & 69.6 & 80.6 & 47.0 & 13.6 & 24.6 \\
% \textit{MoPPS} & 0.0 & 69.7 & 80.9 & 44.4 & 14.8 & 20.0 \\
% % \textit{Current FR Est.} & 5.0 & 68.8 & 76.2 & 47.0 & 17.3 & 25.5 \\
% \textit{Previous FR Est.} & 11.7 & 67.9 & 76.5 & 44.8 & 13.2 & 23.2 \\

% \bottomrule
% \end{tabular}%
% }
% \end{table}

\begin{table}[t]
\centering
\small
\caption{Average and peak GPU memory, together with average and peak CPU memory, under different difficulty estimation methods.}
\label{tab:prelim_memory_usage}
\begin{tabular}{lcccc}
\toprule
Method & Avg. GPU Mem. & Peak GPU Mem. & Avg. CPU Mem. & Peak CPU Mem. \\
\midrule
\textit{GRPO} & 26.62 & 50.46 & 6.50 & 10.26 \\
\textit{EDCO} & 26.61 & 50.50 & 7.80 & 14.50 \\
\textit{LLM-Judge} & 26.60 & 50.43 & 7.33 & 13.15 \\
\textit{DOTS} & 26.64 & 50.45 & 7.08 & 12.51 \\
\textit{Previous FR Estimation} & 23.75 & 47.94 & 9.32 & 18.85 \\
\textit{MoPPS} & 26.62 & 50.50 & 7.80 & 14.50 \\
% \hline
\textit{\ours} & 25.75 & 50.94 & 7.32 & 15.65 \\
\bottomrule
\end{tabular}
\end{table}

% Moved to main text (\autoref{sec:main_results}).

\section{Ablation Studies and Additional Investigations}
\label{sec:additional_invesitgations}

\subsection{Effective sample size of the SNIS estimator}
\label{sec:ess_threshold}
To decide whether the SNIS estimate is reliable enough to be used as the difficulty signal, we monitor the effective sample size computed from the clipped importance weights $\{w_k\}_{k=1}^{K}$ (Eq.~\eqref{eq:is_weights}) associated with the $K$ buffered rollouts of prompt $q$:
\begin{equation}
\label{eq:ess}
    \mathrm{ESS}_q
    =
    \frac{\left(\sum_{k=1}^{K} w_k\right)^{2}}{\sum_{k=1}^{K} w_k^{2}}
    \;\in\;[1,\,K].
\end{equation}
$\mathrm{ESS}_q$ approaches $K$ when the buffered behavior policies are close to the current policy, and drops toward $1$ when the weights are dominated by a single stale rollout. In the fusion step of Algorithm~\ref{alg:ours}, the SNIS estimate is accepted only if $\mathrm{ESS}_q\geq\tau_{\mathrm{ESS}}$ (we use $\tau_{\mathrm{ESS}}=3$ throughout); otherwise the estimator falls back to the embedding-based teacher prediction from the cold-start phase. \autoref{tab:ess_threshold_sweep} reports the coverage (fraction of buffered prompts whose SNIS estimate is accepted) and MAE on the $200$ held-out prompts of \autoref{tab:difficulty_accuracy} as $\tau_{\mathrm{ESS}}$ varies over $\{1,2,3,4,5\}$; $\tau_{\mathrm{ESS}}=3$ minimizes MAE ($0.189$) at a coverage of $88.7\%$, striking the best trade-off between rejecting weight-collapsed estimates and leaving enough prompts under SNIS control to matter.

\begin{table}[h]
\centering
\small
\caption{Effect of the ESS acceptance threshold $\tau_{\mathrm{ESS}}$ on the SNIS difficulty estimator (Qwen2.5-Math-1.5B, same $200$ held-out prompts as \autoref{tab:difficulty_accuracy}, group size $G=8$ so $\mathrm{ESS}_q\in[1,8]$). Coverage is the fraction of prompts for which the SNIS estimate is accepted; rejected prompts fall back to the cold-start embedding predictor. Lower MAE is better.}
\label{tab:ess_threshold_sweep}
\begin{tabular}{cccccc}
\toprule
$\tau_{\mathrm{ESS}}$ & 1 & 2 & 3 & 4 & 5 \\
\midrule
Coverage (\%) & 99.1 & 95.0 & 88.7 & 76.4 & 46.2 \\
MAE$\downarrow$ & 0.2190 & 0.2010 & \textbf{0.1889} & 0.2185 & 0.2498 \\
\bottomrule
\end{tabular}
\end{table}

\subsection{Case study: stratified generation behavior.}
\label{sec:case_study}

  \autoref{fig:case_study} illustrates the stratified generation behavior with two representative examples from the MATH benchmark~\citep{hendrycks2021measuring}.

  \begin{figure}[h]
  \centering
  \small
  \fbox{%
  \begin{minipage}{0.96\linewidth}
  \textbf{Level 1 (Easy)} \hfill \textit{Long: 334 tokens \quad Short: 219 tokens}

  \textbf{Problem:} Find $k$, if ${(3^k)}^6=3^6$.

  \medskip
  \textbf{Long rollout (334 tokens, \cmark):}

  \begin{quote}
  To solve $(3^k)^6 = 3^6$, we use $(a^m)^n = a^{m\cdot n}$:
  \[
  (3^k)^6 = 3^{6k}
  \]
  So $3^{6k} = 3^6$, thus $6k = 6$, giving $k = 1$.
  Substituting back: $(3^1)^6 = 3^6 \implies 3^6 = 3^6$. The solution is correct.
  We can also verify with code: \texttt{(3**1)**6 == 3**6} returns \texttt{True}. \quad $\boxed{1}$
  \end{quote}

  \textbf{Short rollout (219 tokens, \cmark):}

  \begin{quote}
  Using the power rule $(a^m)^n = a^{mn}$: $(3^k)^6 = 3^{6k}$.
  So $3^{6k} = 3^6 \Rightarrow 6k = 6 \Rightarrow k = 1$.
  Substituting back: $(3^1)^6 = 3^6 \implies 3^6 = 3^6$. \quad $\boxed{1}$
  \end{quote}
  \end{minipage}}

  \vspace{6pt}

  \fbox{%
  \begin{minipage}{0.96\linewidth}
  \textbf{Level 5 (Hard)} \hfill \textit{Short wrong vs.\ long correct rollout}

  \textbf{Problem:} There are 360 people in a school. 15 take calculus, physics, and chemistry; 15 take none. 180 take calculus. Twice as many take chemistry as physics. 75 take both calculus and chemistry; 75 take both physics and chemistry; 30 take both physics and calculus. How many take physics?

  \medskip
  \textbf{Short rollout (427 tokens, \xmark):}

  \begin{quote}
  Using inclusion-exclusion: $180 + P + 2P - (15 + 75 + 75) + 30 = 345$,
  so $45 + 3P = 345$, giving $P = 100$. \quad $\boxed{100}$ \quad \textit{(wrong: misassigns $|C\cap P|=15$ and $|C\cap P\cap Ch|=30$)}
  \end{quote}

  \textbf{Long rollout (970 tokens, \cmark):}

  \begin{quote}
  Let $|C|=180$, $|P|$, $|Ch|=2|P|$, $|C\cap P\cap Ch|=15$, $|C\cap Ch|=75$, $|P\cap Ch|=75$, $|C\cap P|=30$, $|C\cup P\cup Ch|=345$.
  By inclusion-exclusion:
  \[
  345 = 180 + |P| + 2|P| - 30 - 75 - 75 + 15 \;\Rightarrow\; 330 = 3|P| \;\Rightarrow\; |P| = 110.
  \]
  \quad $\boxed{110}$
  \end{quote}
  \end{minipage}}

  \caption{Case study of stratified generation behavior. For the easy Level-1 problem, the model produces a concise 219-token solution. For the hard Level-5 problem, a short 427-token rollout misassigns intersection values and gives the wrong answer (100), while a longer 970-token rollout carefully applies inclusion-exclusion and arrives at the correct answer (110).}
  \label{fig:case_study}
  \end{figure}

  The Level-1 problem is solved correctly by all 64 rollouts; the shortest correct response uses only 219 tokens, demonstrating that the EasyLength penalty successfully encourages concise reasoning on easy prompts. In contrast, the Level-5 problem has an accuracy of only 23/64: the shorter 427-token rollout misassigns the intersection sizes and arrives at the wrong answer (100 instead of 110), while the longer 970-token rollout carefully applies the inclusion-exclusion principle and reaches the correct answer. This contrast illustrates how the think-longer re-rollout for hard prompts---generating extra rollouts under a hint-augmented prompt and replacing incorrect responses with correct ones---provides richer gradient signal that pure filtration cannot supply.

\subsection{Entropy--Difficulty Relationship Across Models}
\label{sec:entropy_difficulty}

\begin{figure}[h]
\centering
\includegraphics[width=1.0\textwidth]{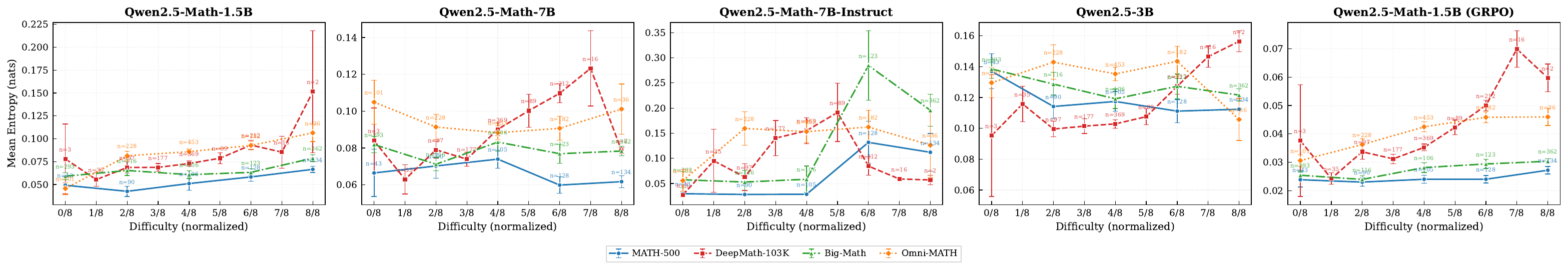}
\caption{Reasoning entropy versus normalized difficulty across multiple models and datasets, adapted from DiffAdapt~\cite{liu2026diffadapt}. The overall tendency is compatible with lower entropy in the middle difficulty range, although the magnitude and consistency of the effect vary across settings.}
\label{fig:entropy_bymodel_all}
\end{figure}

\autoref{fig:entropy_bymodel_all} visualizes the relationship between sequence-level reasoning entropy and normalized prompt difficulty across four models (Qwen2.5-Math-1.5B, Qwen2.5-3B, Qwen2.5-Math-7B, and Qwen2.5-7B) on multiple datasets. Across most settings, entropy tends to be lower for prompts of moderate difficulty, consistent with the intuition that mid-difficulty prompts elicit more focused reasoning patterns. However, the magnitude and consistency of this trend vary substantially across model--dataset combinations, suggesting that entropy alone is an unreliable proxy for adaptive difficulty---motivating the importance-sampling-based estimator proposed in \autoref{sec:is_estimation}.

\begin{table}[t]
\centering
\small
\caption{Effect of the easy penalty coefficient $\lambda_{\mathrm{easy}}$ and hard bonus coefficient $\lambda_{\mathrm{hard}}$ in the shaped reward on Qwen2.5-Math-1.5B. The top row is \ours without reward shaping. ``Tok'' denotes the average number of response tokens per problem at inference time.}
\label{tab:el_hl_sweep}
\resizebox{\textwidth}{!}{%
\begin{tabular}{cc cc cc cc cc cc}
\toprule
\multirow{2}{*}{$\lambda_{\mathrm{easy}}$} & \multirow{2}{*}{$\lambda_{\mathrm{hard}}$} &
\multicolumn{2}{c}{AIME24/25} &
\multicolumn{2}{c}{GSM8K} & \multicolumn{2}{c}{AIME-AMC} &
\multicolumn{2}{c}{Minerva} & \multicolumn{2}{c}{Olympiad} \\
\cmidrule(lr){3-4}\cmidrule(lr){5-6}\cmidrule(lr){7-8}\cmidrule(lr){9-10}\cmidrule(lr){11-12}
 & & Acc & Tok & Acc & Tok & Acc & Tok & Acc & Tok & Acc & Tok \\
\midrule
-- & --                                  & \textbf{8.33} & 1266        & 80.14          & 370 & 46.99          & 868  & 14.34          & 746  & 24.04          & 920  \\
$10^{-4}$ & $10^{-4}$                  & 3.33          & 1466  & 82.49          & 351 & 46.99          & 1056 & \textbf{17.28} & 882  & \textbf{25.07} & 1062 \\
$10^{-4}$ & --                         & 6.67          & \textbf{1000}          & 86.43          & \textbf{233} & 46.99          & 796  & 12.87          & 720  & 21.96          & 764  \\
$2{\times}10^{-4}$ & $10^{-4}$         & 3.33          & 1104         & \textbf{89.01}          & 248 & 44.58          & \textbf{741}  & 12.87          & \textbf{712}  & 21.07          & \textbf{667}  \\
$10^{-4}$ & $2{\times}10^{-4}$         & 6.67          & 2011        & 80.52          & 588 & \textbf{48.19} & 1842 & 16.91          & 1515 & 24.04          & 1491 \\
% $10^{-3}$ & --                         & 0.00          & 697  & 65.00          & \textbf{89.99} & 171 & 42.17          & 458  & 11.03          & 333  & 14.99          & 539  \\
\bottomrule
\end{tabular}}
\end{table}

\begin{table}[t]
\centering
\small
\caption{Ablation on asymmetric clipping range for easy prompts on Qwen2.5-Math-1.5B. The clipping bounds are $1-\epsilon_q^{-}$ and $1+\epsilon_q^{+}$; the top row is \ours with symmetric clip ranges. Relaxing only the upper bound (0.8--1.6) yields the best overall accuracy.}
\label{tab:clip_sweep}
\begin{tabular}{lcccccc}
\toprule
Clip Range & AIME24/25 & MATH-500 & GSM8K & AIME-AMC & Minerva & Olympiad \\
\midrule
0.8--1.2 & 8.33 & 69.80 & 80.14 & 46.99 & 14.34 & 24.04 \\
0.6--1.2 & 6.67 & 69.32 & 78.82 & 45.54 & 11.70 & 23.41 \\
0.6--1.4 & 8.33 & 70.40 & 81.79 & 48.80 & 17.65 & 24.04 \\
0.8--1.4 & 8.33 & 70.60 & 82.34 & 49.40 & 18.75 & 24.04 \\
0.8--1.6  & 10.00 & 70.84 & 81.20 & 50.12 & 19.99 & 24.04 \\
\bottomrule
\end{tabular}
\end{table}

% \subsection{Learning Rate and Scheduler Sensitivity}
% \label{sec:loss_ablation}

% \begin{figure}[h]
% \centering
% \includegraphics[width=0.7\textwidth]{figure/loss_ablation_math500.pdf}
% \caption{MATH-500 accuracy over training steps under different learning rates ($2{\times}10^{-4}$, $10^{-3}$, $2{\times}10^{-3}$), with and without a cosine schedule (Qwen2.5-Math-1.5B, trained on the same four-dataset union as the main results). \ours's gains are consistent across all four configurations, indicating that the improvements do not hinge on a specific optimizer schedule.}
% \label{fig:loss_ablation_MATH-500}
% \end{figure}

% \subsection{Additional Ablation Studies}
% \label{sec:ablation_studies}

\subsection{Ablation results on Qwen2.5-Math-1.5B.}
\label{sec:ablation_1_5b}

\autoref{tab:ablation_1_5b} isolates the contribution of each component on Qwen2.5-Math-1.5B by adding each design independently to the \textit{GRPO} baseline, and then evaluates their combined effect in \ours. The results show that each component improves over \textit{GRPO} on overall accuracy, with gains ranging from $+1.8$ for \textit{SNIS} to $+3.5$ for the hard bonus. Dynamic data selection, the easy penalty, and the hard bonus provide consistent gains across most benchmarks, indicating that accurate difficulty estimation, difficulty-aware sampling, and difficulty-conditioned reward shaping each contribute to better reasoning performance. Combining all components yields the strongest results on every benchmark, improving the overall score from $45.2$ to $50.8$ ($+5.6$), which confirms that the components are complementary rather than redundant.

\begin{table}[h]
\centering
\caption{Per-component ablation on Qwen2.5-Math-1.5B, mirroring \autoref{tab:main_results}. Each row adds one component \emph{independently} on top of the \textit{GRPO} baseline; the bottom row \emph{\ours (all components)} combines all components. \textcolor{green!50!black}{\small$\blacktriangle$} / \textcolor{red!70!black}{\small$\blacktriangledown$} denotes improvement / degradation over \textit{GRPO}.}
\label{tab:ablation_1_5b}
\resizebox{\textwidth}{!}{%
\begin{tabular}{lcccccc}
\toprule
Method & MATH-500 & GSM8K & AIME-AMC & Minerva & Olympiad & Overall \\
\midrule
\textit{GRPO} & 66.8 & 77.6 & 45.8 & 13.2 & 22.6 & 45.2 \\
\quad + \textit{SNIS} & 69.8\,{\scriptsize\up{3.0}} & 80.1\,{\scriptsize\up{2.5}} & 46.7\,{\scriptsize\up{0.9}} & 14.3\,{\scriptsize\up{1.1}} & 24.0\,{\scriptsize\up{1.4}} & 47.0\,{\scriptsize\up{1.8}} \\
\quad + \iconeasy\iconmedium\iconhard dynamic data selection & 71.6\,{\scriptsize\up{4.8}} & 83.4\,{\scriptsize\up{5.8}} & 47.0\,{\scriptsize\up{1.2}} & 15.1\,{\scriptsize\up{1.9}} & 24.2\,{\scriptsize\up{1.6}} & 48.3\,{\scriptsize\up{3.1}} \\
% \quad + \iconhard Think-longer re-rollout & 71.8\,{\scriptsize\up{5.0}} & 81.1\,{\scriptsize\up{3.5}} & 48.2\,{\scriptsize\up{2.4}} & 15.1\,{\scriptsize\up{1.9}} & 25.4\,{\scriptsize\up{2.8}} & 48.3\,{\scriptsize\up{3.1}} \\
\quad + \iconhard easy penalty & 69.4\,{\scriptsize\up{2.6}} & 83.5\,{\scriptsize\up{5.9}} & 48.2\,{\scriptsize\up{2.4}} & 16.2\,{\scriptsize\up{3.0}} & 24.0\,{\scriptsize\up{1.4}} & 48.5\,{\scriptsize\up{3.3}} \\
\quad + \iconhard hard bonus & 71.6\,{\scriptsize\up{4.8}} & 82.5\,{\scriptsize\up{4.9}} & 47.0\,{\scriptsize\up{1.2}} & 17.3\,{\scriptsize\up{4.1}} & 25.1\,{\scriptsize\up{2.5}} & 48.7\,{\scriptsize\up{3.5}} \\
% \quad + \iconeasy Relaxed upper clipping & 70.6\,{\scriptsize\up{3.8}} & 82.3\,{\scriptsize\up{4.7}} & 49.4\,{\scriptsize\up{3.6}} & 18.8\,{\scriptsize\up{5.6}} & 24.0\,{\scriptsize\up{1.4}} & 49.0\,{\scriptsize\up{3.8}} \\
\cmidrule(lr){1-7}
\ours (all components) & \textbf{72.6}\,{\scriptsize\up{5.8}} & \textbf{84.5}\,{\scriptsize\up{6.9}} & \textbf{50.6}\,{\scriptsize\up{4.8}} & \textbf{19.9}\,{\scriptsize\up{6.7}} & \textbf{26.4}\,{\scriptsize\up{3.8}} & \textbf{50.8}\,{\scriptsize\up{5.6}} \\
\bottomrule
\end{tabular}%
}
\end{table}

\subsection{Additional results on SmolLM3-3B.}
\label{sec:ablation_smollm}

\autoref{fig:training_curves_smollm} compares the training dynamics of \ours and baseline methods on SmolLM3-3B-Base, using both training steps and wall-clock time as the optimization budget. The results show that \ours consistently achieves higher accuracy than all baselines on both MATH-500 and GSM8K throughout training. The shaded regions indicate a clear and persistent performance gap over the strongest competing method, showing that the gain is not limited to the final checkpoint. Moreover, the wall-clock comparison on MATH-500 confirms that this advantage remains when accounting for actual training time, indicating that \ours improves both training efficiency and final effectiveness on SmolLM3-3B-Base.

\begin{figure}[ht]
\centering
\includegraphics[width=1.0\textwidth]{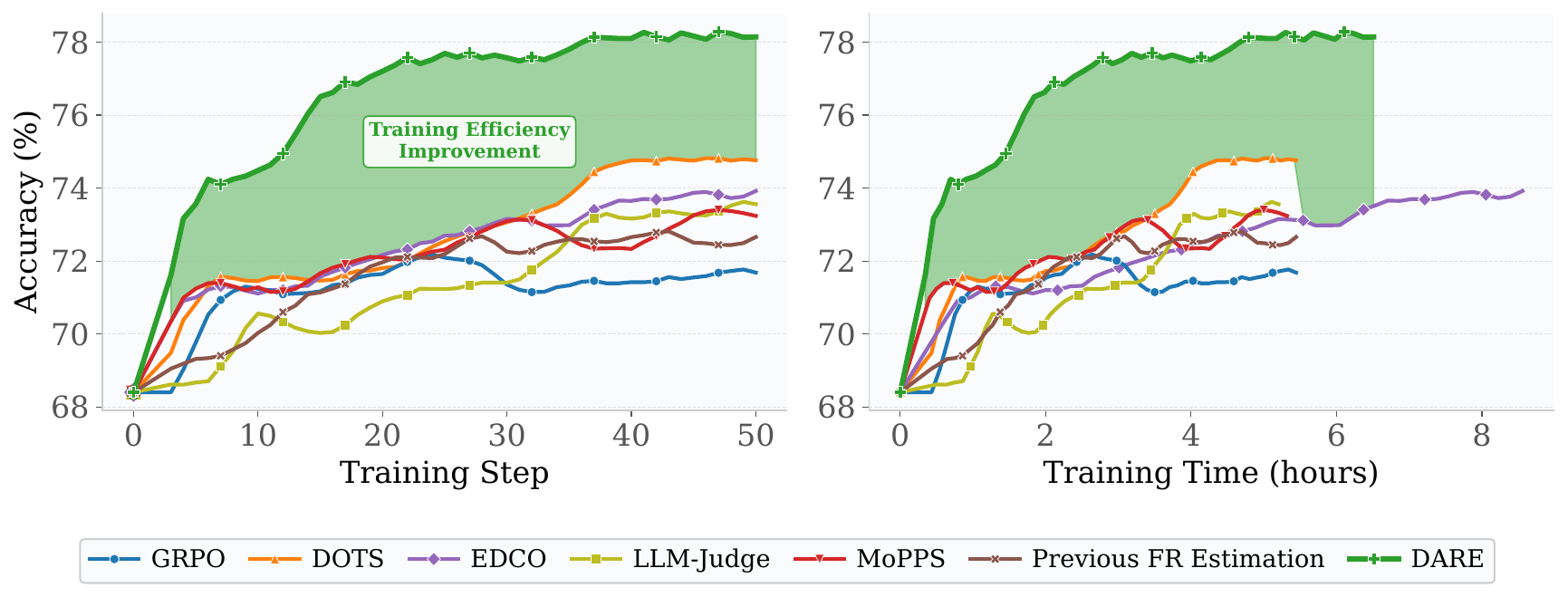}
\caption{Training curves on MATH-500 (left), GSM8K (center), and MATH-500 vs.\ training time (right) for SmolLM3-3B-Base. \ours consistently outperforms all baselines throughout training, with the shaded region highlighting the accuracy advantage over the best competing method. The training-time plot confirms that the advantage is preserved when the $x$-axis is wall-clock time rather than training steps.}
\label{fig:training_curves_smollm}
\end{figure}

\section{Generalization to Code Generation}
\label{sec:coder_transfer}

To verify that the SNIS-over-baselines ordering observed on mathematical reasoning also holds on a domain with execution-based rewards, we train Qwen2.5-Coder-1.5B~\citep{hui2024qwen2coder} under the same set of difficulty-estimation baselines used in \autoref{sec:experiments}. Training data consists of $10{,}240$ problems drawn in a 50/50 split from TACO~\citep{li2023taco} (BAAI) and APPS~\citep{hendrycks2021apps} (codeparrot), with per-problem stdin/stdout test cases used as the verifiable reward. Evaluation covers the standard code-generation setup on HumanEval~\citep{chen2021humaneval} and MBPP~\citep{austin2021mbpp} together with their EvalPlus hardened variants HumanEval+ and MBPP+~\citep{liu2023evalplus}, and the contamination-controlled LiveCodeBench~\citep{jain2025livecodebench_iclr}.

\autoref{tab:coder_final_acc} reports final accuracy on the four benchmarks. SNIS difficulty estimation is the top row on HumanEval (46.9) and MBPP (71.4), and is within $0.9$ points of the best competitor on HumanEval+ and MBPP+. \autoref{fig:training_curves_coder} shows the corresponding training trajectories, where SNIS maintains a consistent lead on HumanEval and MBPP throughout training and tracks the top tier on the plus variants. Together, these results indicate that the estimator-level advantage transfers from mathematical reasoning to code generation.

\begin{table}[h]
\centering
\small
\caption{Final accuracy on code-generation benchmarks on Qwen2.5-Coder-1.5B, trained on TACO\,+\,APPS (50/50, $10{,}240$ prompts). Results are reported as pass@1 (\%). The best per-column value is shown in bold.}
\label{tab:coder_final_acc}
\begin{tabular}{lcccc}
\toprule
Method & HumanEval  & MBPP  \\
\midrule
\textit{GRPO} & 44.96 & 69.32 \\
\textit{DOTS} & 43.70  & 68.37 \\
\textit{EDCO} & 43.23 & 70.17 \\
\textit{LLM-Judge} & 45.80 & 70.48  \\
\textit{MoPPS} & 46.17 & 69.29\\
\textit{Previous FR Estimation} & 45.58  & 69.72 \\
\midrule
% \textit{Current FR Estimation} & 45.60 & 38.70 & 71.37 & 59.34 \\
\ours (Ours) & \textbf{46.86} & \textbf{71.44}  \\
\bottomrule
\end{tabular}
\end{table}

\begin{figure}[h]
\centering
\includegraphics[width=1.0\textwidth]{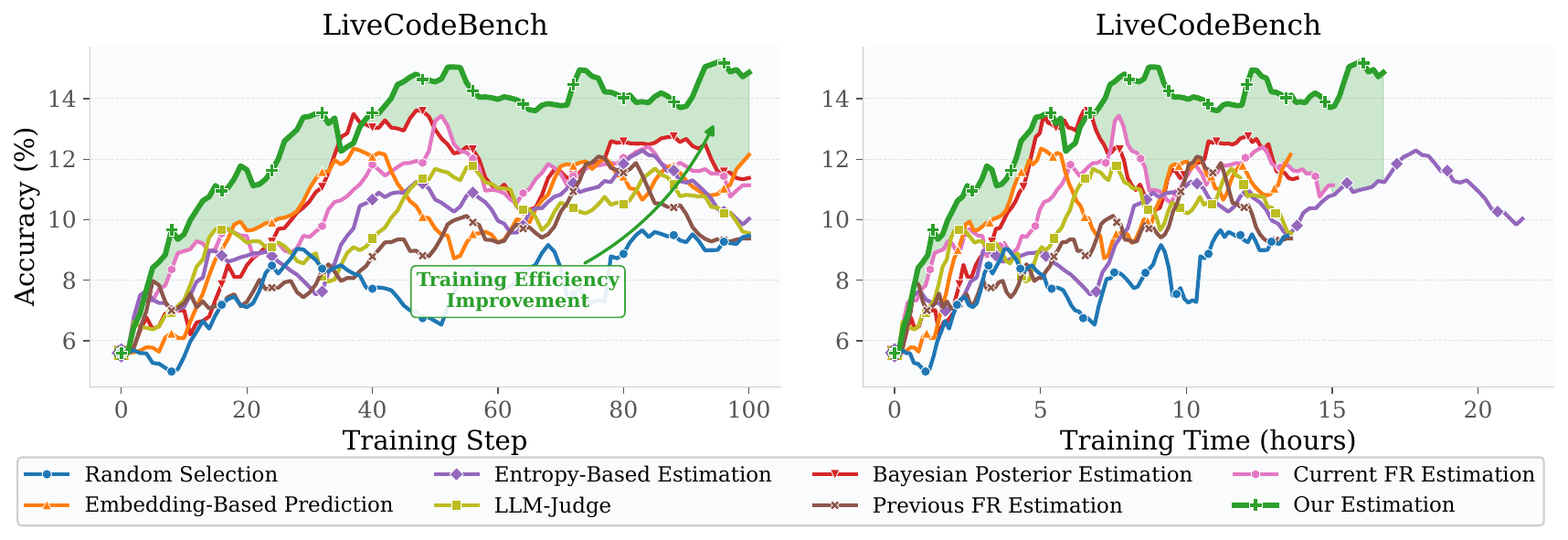}
\caption{Training curves on code-generation benchmarks for Qwen2.5-Coder-1.5B \cite{hui2024qwen2coder}. Our difficulty estimation tracks or exceeds all baselines throughout training, with the largest relative advantage on HumanEval and MBPP.}
\label{fig:training_curves_coder}
\end{figure}

\section{Training Efficiency Analysis}
\label{sec:training_efficiency}

\paragraph{Metrics.}
To go beyond visual inspection of the training curves in \autoref{fig:training_curves_7b} and \autoref{fig:training_curves_smollm}, we summarize each training run with two budget-level metrics defined directly on the per-epoch accuracy trajectory $\mathrm{acc}_m(s)$ of method $m$ at training step $s$. Let $H$ denote the common horizon (the last step reached by every method on a given benchmark). We define
\begin{equation}
\label{eq:steps_to_target}
S_m(\tau)=\min\bigl\{s\in[0,H]\colon \mathrm{acc}_m(s)\ge\tau\bigr\},
\qquad
\mathrm{sp}_m(\tau)=\frac{S_{\mathrm{base}}(\tau)}{S_m(\tau)},
\end{equation}
the step at which $m$ first reaches target accuracy $\tau$ and the corresponding \emph{step speedup} over the \textit{Random Selection} baseline. We further define the normalized area under the accuracy curve,
\begin{equation}
\label{eq:norm_auc}
\mathrm{AUC}_m \;=\; \frac{1}{H}\int_{0}^{H}\mathrm{acc}_m(s)\,ds
\;\approx\; \frac{1}{H}\sum_{s=0}^{H-1}\tfrac{1}{2}\!\bigl(\mathrm{acc}_m(s)+\mathrm{acc}_m(s{+}1)\bigr),
\end{equation}
which equals the average accuracy attained throughout training and is insensitive to evaluation noise at any single step. Unlike final accuracy, $\mathrm{AUC}_m$ jointly penalizes slow starts and early plateaus, so a higher value means both \emph{how fast} and \emph{how high} the curve climbs are favorable.

\paragraph{Results on Qwen2.5-Math-1.5B and Qwen2.5-Math-7B.}
\autoref{tab:training_efficiency} reports $S_m(\tau)$, $\mathrm{sp}_m(\tau)$, and $\mathrm{AUC}_m$ on Qwen2.5-Math-1.5B and Qwen2.5-Math-7B for MATH-500 and GSM8K. SNIS reaches the highest MATH-500 target on Qwen2.5-Math-1.5B ($67\%$) in $42$ steps, compared to $118$ steps for \textit{Random Selection} ($\mathrm{sp}{=}2.81\times$); on Qwen2.5-Math-7B MATH-500 the highest target ($75\%$) is reached in $14$ steps vs.\ $122$ ($\mathrm{sp}{=}8.71\times$). In terms of $\mathrm{AUC}$, SNIS is the top row on Qwen2.5-Math-1.5B MATH-500 and ties the top tier ($\mathrm{AUC}$ within $0.2$ of the best) on Qwen2.5-Math-1.5B GSM8K and Qwen2.5-Math-7B MATH-500. On Qwen2.5-Math-7B GSM8K, \textit{MoPPS} attains the highest $\mathrm{AUC}$ ($88.8$ vs.\ SNIS $86.5$), consistent with the pattern in \autoref{fig:training_curves_7b} where GSM8K saturates early for all filtration-based methods.

\begin{table}[h]
\centering
\small
\caption{Training efficiency of SNIS difficulty estimation versus filtration-only baselines on Qwen2.5-Math-1.5B and Qwen2.5-Math-7B, computed from the per-epoch accuracy trajectories used in \autoref{fig:training_curves_7b}. $\mathrm{AUC}$ is the normalized area under the accuracy curve (\autoref{eq:norm_auc}, as a percentage). $S_\tau$ is the smallest training step at which a method first reaches accuracy $\tau$ (\autoref{eq:steps_to_target}); ``--'' means the target is not reached within the common horizon. $\mathrm{sp}_\tau$ is the step speedup $S_{\mathrm{Random}}(\tau)/S_m(\tau)$; on Qwen2.5-Math-7B GSM8K, \textit{Random Selection} does not reach $\tau{=}0.88$, so we report $S_{88}$ only.}
\label{tab:training_efficiency}
\resizebox{\textwidth}{!}{%
\begin{tabular}{l cc cc cc cc}
\toprule
& \multicolumn{4}{c}{\textit{Qwen2.5-Math-1.5B}} & \multicolumn{4}{c}{\textit{Qwen2.5-Math-7B}} \\
\cmidrule(lr){2-5}\cmidrule(lr){6-9}
& \multicolumn{2}{c}{MATH-500 (H=194)} & \multicolumn{2}{c}{GSM8K (H=194)} & \multicolumn{2}{c}{MATH-500 (H=198)} & \multicolumn{2}{c}{GSM8K (H=198)} \\
\cmidrule(lr){2-3}\cmidrule(lr){4-5}\cmidrule(lr){6-7}\cmidrule(lr){8-9}
Method & AUC & $S_{67}/\mathrm{sp}_{67}$ & AUC & $S_{78}/\mathrm{sp}_{78}$ & AUC & $S_{75}/\mathrm{sp}_{75}$ & AUC & $S_{88}$ \\
\midrule
\textit{Random Selection} & 64.4 & 118 / 1.00$\times$ & 72.4 & 184 / 1.00$\times$ & 72.2 & 122 / 1.00$\times$ & 84.3 & -- \\
\textit{DOTS} & 65.7 & 64 / 1.84$\times$ & 75.5 & 74 / 2.49$\times$ & \textbf{73.3} & 52 / 2.35$\times$ & 86.3 & 56 \\
\textit{EDCO} & 65.5 & 86 / 1.37$\times$ & 75.2 & 104 / 1.77$\times$ & 73.2 & 52 / 2.35$\times$ & 86.1 & \textbf{12} \\
\textit{LLM-Judge} & 65.1 & 102 / 1.16$\times$ & 74.1 & 124 / 1.48$\times$ & 72.8 & 52 / 2.35$\times$ & 85.4 & 162 \\
\textit{MoPPS} & 65.1 & 92 / 1.28$\times$ & \textbf{77.3} & 54 / 3.41$\times$ & 72.7 & 114 / 1.07$\times$ & \textbf{88.8} & 30 \\
\textit{Previous FR Estimation} & 64.1 & 114 / 1.04$\times$ & 73.1 & 82 / 2.24$\times$ & 73.0 & 118 / 1.03$\times$ & 86.1 & 58 \\
\ours (Ours) & \textbf{66.3} & \textbf{42 / 2.81$\times$} & 77.1 & \textbf{46 / 4.00$\times$} & 73.2 & \textbf{14 / 8.71$\times$} & 86.5 & 36 \\
\bottomrule
\end{tabular}%
}
\end{table}

\paragraph{Results on SmolLM3-3B.}
\autoref{tab:training_efficiency_smollm} reports the same metrics on SmolLM3-3B-Base, computed from the curves in \autoref{fig:training_curves_smollm}. The common horizon is $H{=}98$ steps; targets are chosen close to the plateau of \textit{Random Selection} so that the step speedups remain informative. SNIS is the top row on both MATH-500 and GSM8K in $\mathrm{AUC}$ and reaches both targets in the fewest steps.

\begin{table}[h]
\centering
\small
\caption{Training efficiency on SmolLM3-3B-Base (same metrics as \autoref{tab:training_efficiency}). \textit{Random Selection} does not reach $\tau{=}0.73$ on MATH-500 or $\tau{=}0.85$ on GSM8K within the common horizon, so step speedups are undefined and we report the absolute step $S_\tau$ only.}
\label{tab:training_efficiency_smollm}
\begin{tabular}{l cc cc}
\toprule
& \multicolumn{2}{c}{MATH-500 (H=98)} & \multicolumn{2}{c}{GSM8K (H=98)} \\
\cmidrule(lr){2-3}\cmidrule(lr){4-5}
Method & AUC & $S_{73}$ & AUC & $S_{85}$ \\
\midrule
\textit{GRPO} & 68.6 & -- & 79.7 & -- \\
\textit{DOTS} & 70.4 & 44 & 80.7 & 22 \\
\textit{EDCO} & 69.9 & 44 & 82.2 & 14 \\
\textit{LLM-Judge} & 69.0 & 62 & 80.9 & 34 \\
\textit{MoPPS} & 69.6 & 50 & 82.1 & 16 \\
\textit{Previous FR Estimation} & 69.1 & 34 & 80.4 & 84 \\
% \textit{Current FR Estimation} & 69.6 & 48 & 80.3 & 70 \\
\ours (Ours) & \textbf{71.0} & \textbf{24} & \textbf{82.8} & \textbf{14} \\
\bottomrule
\end{tabular}
\end{table}

\section{Accuracy Under Constrained Rollout Budgets}
\label{sec:budget_accuracy_appendix}

To further validate the practical impact of improved difficulty estimation, we evaluate downstream accuracy under constrained rollout budgets on two challenging benchmarks: AIMO-AMC (83 problems) and MATH-500 Level~5 (134 problems). As shown in \autoref{fig:budget_accuracy}, our SNIS difficulty estimation method consistently achieves higher accuracy than \textit{Embedding-Based Prediction}, \textit{LLM-Judge}, and \textit{Uniform GRPO} baselines across all budget levels ($K{=}1,4,8,16,32,64$), with the advantage most pronounced at small budgets where accurate difficulty estimation matters most. \autoref{fig:budget_accuracy} also shows that the set of problems solved by SNIS difficulty estimation overlaps strongly with baseline coverage while still including additional uniquely solved examples.

\begin{figure}[h]
\centering
\begin{minipage}[t]{0.48\textwidth}
    \centering
    \includegraphics[width=\textwidth]{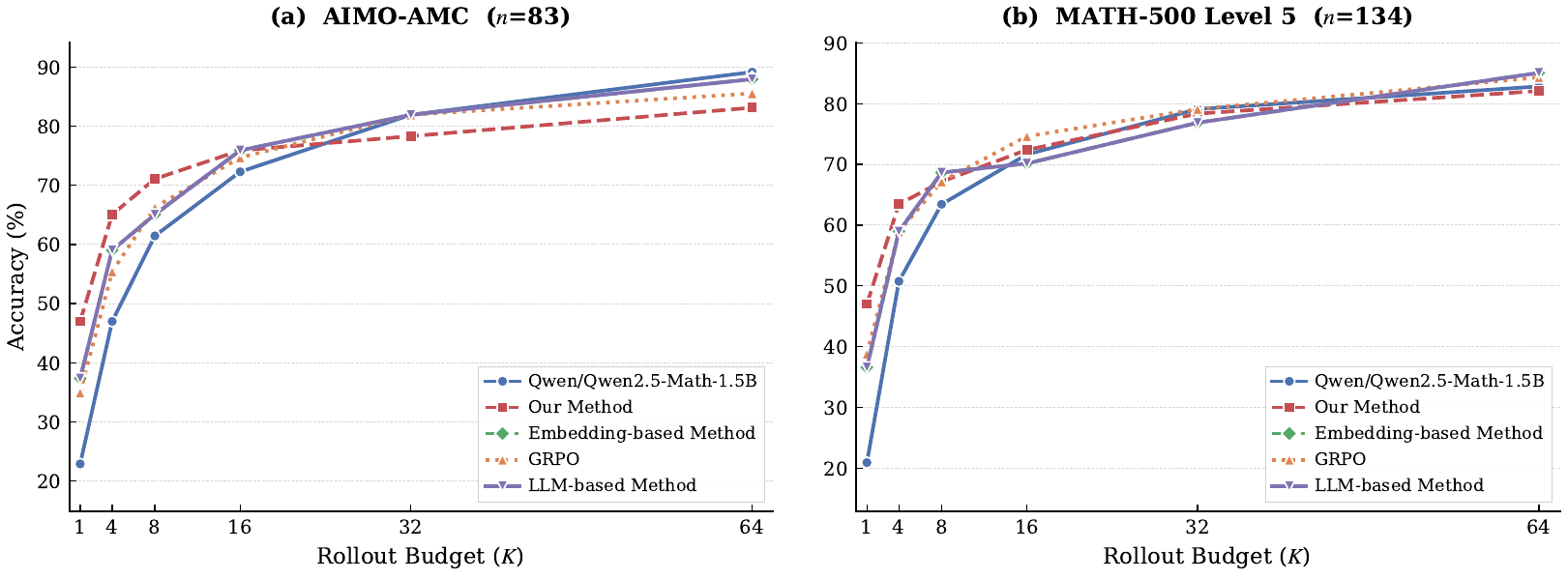}
\end{minipage}\hfill
\begin{minipage}[t]{0.48\textwidth}
    \centering
    \includegraphics[width=\textwidth]{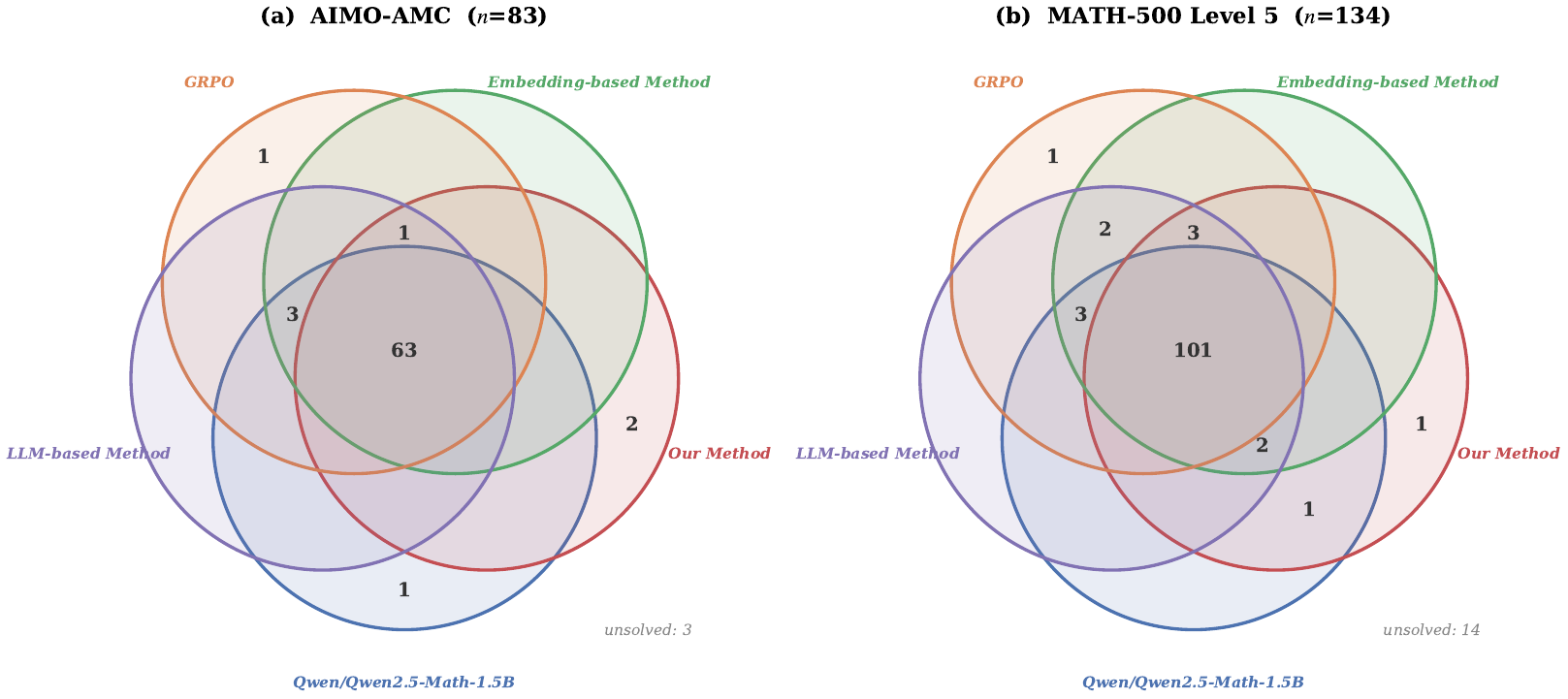}
\end{minipage}
\caption{\textbf{Left}: Accuracy under constrained rollout budgets on AIMO-AMC and MATH-500 Level~5. Our difficulty estimation method achieves consistently higher accuracy, with the largest gains at small budgets. \textbf{Right}: Venn diagram of problems solved at $K{=}64$. Our difficulty estimation covers most problems solved by any baseline while uniquely solving additional examples.}
\label{fig:budget_accuracy}
\end{figure}

\section{Impact Statement}
\label{sec:impact}

This work proposes a difficulty-adaptive RL fine-tuning framework that improves the sample efficiency of training LLM-based mathematical reasoners. On the positive side, reducing the number of rollouts required to reach a given accuracy level directly lowers the compute and energy cost of RL fine-tuning, which benefits researchers with smaller hardware budgets and reduces the environmental footprint of training strong reasoning models; it also makes it more tractable to iterate on reward design and curriculum strategies. On the negative side, our method makes capable mathematical-reasoning LLMs cheaper to obtain, which could amplify existing concerns around misuse of reasoning models (for example, automated assistance with academic assessments). These risks are properties of capable reasoning LLMs in general rather than risks introduced by the proposed training algorithm, and we do not release any new pretrained model weights as part of this work.

% \section{Limitations}

% Although \ours improves training efficiency, final accuracy, and inference-token efficiency across multiple models and benchmarks, several limitations remain. First, \ours relies on verifiable binary rewards to estimate prompt difficulty and shape training signals, which may limit its direct use in tasks where correctness is subjective or hard to verify. Third, the replay buffer and SNIS-based difficulty estimation introduce extra implementation complexity and require storing historical rollouts and behavior-policy log probabilities. While this overhead is modest in our experiments, its cost may become more significant at larger scales. Finally, the difficulty thresholds, rollout budgets, and reward-shaping coefficients are fixed in our current implementation. Automatically adapting these hyperparameters during training is an important direction for future work.